\title{Symmetry-Informed Governing Equation Discovery}
\author{
Jianke Yang \\
UCSD
\And
Wang Rao\thanks{Work partially done while visiting UCSD} \\
Tsinghua University
\And
Nima Dehmamy \\
IBM Research
\And
Robin Walters \\
Northeastern University
\And
Rose Yu \\
UCSD
%
  % David S.~Hippocampus\thanks{Use footnote for providing further information
  %   about author (webpage, alternative address)---\emph{not} for acknowledging
  %   funding agencies.} \\
  % Department of Computer Science\\
  % Cranberry-Lemon University\\
  % Pittsburgh, PA 15213 \\
  % \texttt{hippo@cs.cranberry-lemon.edu} \\
  % examples of more authors
  % \And
  % Coauthor \\
  % Affiliation \\
  % Address \\
  % \texttt{email} \\
  % \AND
  % Coauthor \\
  % Affiliation \\
  % Address \\
  % \texttt{email} \\
  % \And
  % Coauthor \\
  % Affiliation \\
  % Address \\
  % \texttt{email} \\
  % \And
  % Coauthor \\
  % Affiliation \\
  % Address \\
  % \texttt{email} \\
}
\theoremstyle{plain}
\newtheorem{theorem}{Theorem}[section]
\newtheorem{proposition}[theorem]{Proposition}
\theoremstyle{definition}
\newtheorem{definition}[theorem]{Definition}
\theoremstyle{remark}
\newtheorem{remark}[theorem]{Remark}
\newcommand\Label[1]{&\refstepcounter{equation}(\theequation)\def\tmplab{#1}\ltx@label\tmplab&}
\NewDocumentCommand{\mref}{m}{\quinn_mref:n {#1}}
\begin{document}

\maketitle

\vspace{-1.5mm}
\begin{abstract}
Despite the advancements in learning governing differential equations from observations of dynamical systems, data-driven methods are often unaware of fundamental physical laws, such as frame invariance. As a result, these algorithms may search an unnecessarily large space and discover less accurate or overly complex equations. In this paper, we propose to leverage symmetry in automated equation discovery to compress the equation search space and improve the accuracy and simplicity of the learned equations. Specifically, we derive equivariance constraints from the time-independent symmetries of ODEs. Depending on the types of symmetries, we develop a pipeline for incorporating symmetry constraints into various equation discovery algorithms, including sparse regression and genetic programming.
%We also integrate a symmetry discovery method to establish a two-stage pipeline capable of learning unknown symmetries from data and finding governing equations that conform to the discovered symmetry. 
In experiments across diverse dynamical systems, our approach demonstrates better robustness against noise and recovers governing equations with significantly higher probability than baselines without symmetry. Our codebase is available at \href{https://github.com/Rose-STL-Lab/symmetry-ode-discovery}{https://github.com/Rose-STL-Lab/symmetry-ode-discovery}.
%and achieves higher success probability in recovering governing equations than baselines without symmetry.
% Equivariant SINDy (ESINDy), which enforces strict symmetry constraints on the SINDy algorithm. Specifically, by utilizing a given set of Lie algebra basis that describe specific symmetries, ESINDy can generate governing equations that strictly satisfy these symmetry constraints. Our approach not only ensures better physical consistency and extrapolation capabilities, which outperform baselines, but also effectively compresses the search space through the symmetry constraints. Furthermore, by integrating LieGAN with ESINDy, we establish an end-to-end pipeline capable of automatically discovering symmetries from datasets and generating governing equations that conform to the discovered symmetry.
\end{abstract}
\vspace{-1mm}

\section{Introduction}
% The discovery of governing equations from data plays a pivotal role in modeling dynamical systems in natural sciences.
Discovering governing equations that can describe and predict observational data from nature is a primary goal of science.
Compared to the black-box function approximators common in machine learning, symbolic equations can reveal a deeper understanding of underlying physical processes.
% Data-driven equation discovery has been applied widely from the macroscopic motion of celestial bodies  \citep{cranmer2020discovering} to the microscopic interactions between molecules  \citep{zhang2020data}.
In this paper, we consider a first-order dynamical system governed by an autonomous ordinary differential equation (ODE):
\begin{equation}
\label{eq:ode-def1}
    \dot {\bm x}(t) =\bm h(\bm x(t))
\end{equation}
where $t \in T$ denotes time, $\bm x(t) \in X$ is the system state at time $t$, $X \subseteq {\mathbb{R}}^d$ is the phase space of the ODE, $\dot {\bm x}(t)$ the time derivative, and $\bm h: X \rightarrow \mathbb R^d$ the vector field describing the evolution dynamics of the system. Our goal is to discover the function $\bm h$ in a symbolic form from the observed trajectories of the system.

Traditionally, discovering governing equations has been a difficult task for human experts due to the complexity of the underlying physical systems and the large search space for equations.
Recently, many new algorithms have been proposed for automatically discovering equations from data. When applied to an ODE, these methods first estimate the label, i.e. the time derivative, by numerically differentiating the trajectory.
% and get a paired dataset $\{ (\bm y_i, \hat{\dot {\bm y}}_i) \}$.
A variety of techniques including sparse function basis regression  \citep{brunton16, champion19} and genetic programming  \citep{cranmer19} are then used to fit the function between the measurement and the estimated derivative. Another class of methods considers the differential equation's variational form to eliminate the need for derivative estimation and improve the robustness against noise  \citep{messenger2021weaksindy, qian2022dcode}.

% \ry{Add something about Occam's Razor in science or the principle of parsimony -- strengthen the motivation for why we want to introduce symmetry beyond just conform to physical laws?}
An important aspect of equation discovery is the principle of parsimony  \citep{sober1981principle} -- \textit{the scientific principle that the most acceptable explanation of a phenomenon is the simplest}. It promotes the scientific goal of identifying a model that offers robust predictive capabilities while maintaining simplicity.
% An important aspect of equation discovery is to ensure that the discovered equations adhere to physical principles, such as conservation laws and symmetries.
Existing methods that fail to consider these principles may end up searching in an unnecessarily large space of equations and discover equations that are overly complex or do not conform to fundamental physical laws.

In this work, we advocate for using the principle of symmetry to guide the equation discovery process. Symmetry has been widely used in machine learning to design equivariant networks  \citep{bronstein2021geometric, emlp, wang2021incorporating}, augment training data  \citep{augerino}, or regularize the model \citep{akhound2023lie, otto2023unified}, leading to improved prediction accuracy, sample efficiency, and generalization.

Specifically, we derive the equivariance constraints from the symmetries of ODEs and solve the constraints explicitly to compress the search space of equations. Alternatively, we can also use symmetry as a regularization loss term to guide the equation discovery process and improve its robustness to measurement noise. In practice, a dynamical system may possess symmetries that are unknown or difficult to describe in simple terms. To account for these situations, we also incorporate a method for learning symmetries with nonlinear group actions  \citep{laligan}. We then establish a pipeline capable of learning unknown symmetries from data and subsequently discovering governing equations that conform to the discovered symmetry. To summarize, our main contributions include:

\begin{itemize}
% [leftmargin=8mm]
    \item We establish a holistic pipeline to use Lie point symmetries of ODEs to improve the accuracy and robustness of  equation discovery algorithms.
    
    \item We  derive the theoretical criterion for symmetry of time-independent ODEs in terms of equivariance of the associated flow map.
    \item From this criterion, we solve the linear symmetry constraint explicitly for compressing the equation search space in sparse regression and promoting parsimony in the learned equations.
    \item For general symmetry, we promote symmetry by a symmetry regularization term when the symmetry constraint cannot be explicitly solved.
    % \item We establish a pipeline to learn symmetries from data and then discover governing equations based on the discovered symmetry, which applies to different equation discovery techniques. \yjk{Move this up front}
    \item In experiments across many dynamical systems with substantial noise, our symmetry-informed approach achieves higher success rates in recovering the governing equations and better robustness against noise.
    % \item Improved extrapolation: As a symmetry-preserving model, ESINDy exhibits better extrapolation capability for long-term evolution prediction.
    % \item Reduced search space: By imposing symmetry constraints, we can compress our search space into a smaller subspace, thereby expediting the learning process and facilitating quick convergence.
\end{itemize}

\section{Related Works}

\subsection{Governing Equation Discovery} 
There are numerous methods for discovering governing equations from data. Genetic programming has been successful in searching exponentially large spaces for combinations of mathematical operations and functions \citep{gaucel14}. Recent works have applied genetic programming to distill graph neural networks into symbolic expressions \citep{cranmer19, cranmer20}. \citet{udrescu2020ai} introduce physical inductive biases to expedite the search.
% In addition, Gaussian process regression provides a non-parametric Bayesian approach to learning unknown differential functions directly from observations \citep{heinonen18}. This method can be augmented by qualitative physical knowledge which guides the Gaussian process \citep{ridderbusch21}.
The main limitation of genetic programming arises from computational complexity due to the combinatorially large search space. Also, genetic programming can suffer from noisy labels \citep{agapitos2012controlling}, which are common in discovering ODEs due to measurement noise.
% The computational cost of Gaussian process regression increases rapidly with the amount of data, limiting their application to large datasets.

% Some other works learn equations based on neural networks. \citet{martius2016extrapolation} and \citet{sahoo2018learning} design shallow networks that represent equations with simple math operations. \citep{becker2023predicting} developed a transformer-based model to discover equations in symbolic form.

Another branch of research involves sparse regression. Originating from SINDy \citep{brunton16}, these methods assume that the equation can be written as a linear combination of some predefined functions and optimize the coefficients with sparsity regularization. Subsequently, \citet{champion19} combines the sparse regression technique with an autoencoder network to simultaneously discover coordinate systems and governing equations. Weak SINDy \citep{messenger2021weaksindy} uses an alternative optimization objective based on the variational form of the ODE, which eliminates the need for estimating time derivatives and improves the robustness against measurement noise. Other works improve upon SINDy by incorporating physical priors, such as dimensional analysis \citep{xie22, bakarji22}, physical structure embedding \citep{lee22}, and symmetries \citep{loiseau18,guan21}.

\subsection{Symmetry Prior in Machine Learning}
Symmetry plays a crucial role in machine learning. There has been a vast body of work on equivariant neural networks \citep{gconv, e2cnn, finzi2020generalizing, emlp, bronstein2021geometric, wang2021incorporating}. These network architectures enforce various symmetries in different data types. Symmetry can also be exploited through data augmentation \citep{augerino} and canonicalization \citep{kaba2023equivariance}. There are already some works that exploit symmetries for learning in dynamical systems governed by differential equations through data augmentation \citep{brandstetter2022lie}, symmetry loss \citep{huh2020time, akhound2023lie} and contrastive learning \citep{ mialon2023self}. However, works using symmetry in recovering underlying equations from data are scarce, and existing examples only consider symmetries specific to a particular system, e.g. reflections and permutations in proper orthogonal decomposition (POD) coefficients \citep{guan21}, rotations and translations in space \citep{ridderbusch21, baddoo2021physicsinformed}, etc. In this work, we provide solutions for dealing with general matrix Lie group symmetries.
% Our method is also compatible with different equation discovery approaches. The symmetries can compress the equation search space, leading to easier learning and simpler equations. They also prove helpful in recovering the equations with highly noisy data.

While symmetry has proved an important inductive bias, for the ODEs considered in this work, the underlying symmetries are often not known a priori. Some recent works have aimed to automatically discover symmetries from data \citep{liu2022machine, yang23, laligan, otto2023unified, van2024learning}. When there is no available prior knowledge about symmetry, We use the adversarial framework in \citet{laligan} to learn unknown symmetries from data and discover the equations subsequently using the learned symmetry.

% \ry{We should talk about the applications of symmetry prior in other machine learning tasks, such as building better neural networks, and predicting better dynamics. symmetry discovery is not the focus on this work.}
% A number of works have aimed to automatically exploit unknown symmetries from data, and they vary a lot in search place. MSR introduced meta-learning discrete finite groups via weight sharing patterns \citep{zhou21}, but its space complexity increases with group size. Augerino proposed learning the extent of a given symmetry \citep{benton20}. L-conv introduced Lie algebra to construct any equivariant architecture \citep{dehmamy21}, but is expensive in practice. SymmetryGAN proposed utilizing generative adversarial network framework to automatically discover the symmetries of datasets \citep{desai22}. However, it cannot learn multiple group elements jointly and relies on additional techniques like subgroup regularization. More recently, LieGAN also introduced a method using generative adversarial training to represent symmetry via interpretable Lie algebra basis \citep{yang23}. It is capable of discovering general linear groups, directly yielding an orthogonal Lie algebra basis as a discovery result.
% \input{sec/background}
\section{Symmetry of ODEs}
\label{sec:ode-symm}
% \subsection{Problem Definition}
Our goal is to discover the governing equations of a dynamical system  \eqref{eq:ode-def1} in symbolic form. Our dataset consists of observed trajectories of the system, $\{ \bm x_{0:T}^{(i)} \}_{i=1}^N$, where $\bm x_t^{(i)} \in X \subseteq \R^d$ denotes the observation at timestep $t$ in the $i$th trajectory. We assume the observations are collected at a regular time interval $\Delta t$. We omit the trajectory index $i$ and consider one trajectory, $\bm x_{0:T}$ for simplicity.
To leverage symmetry for better equation discovery, we first define symmetries in ODEs and provide the theoretical foundation for our methodology. In particular, we show that a time-independent symmetry of the ODE is equivalent to the equivariance of its flow map. This equivariance property can then be conveniently exploited to enforce symmetry constraints in the equation discovery process.

\subsection{Lie Point Symmetry and Flow Map Equivariance}
The Lie point symmetry of an ODE is a transformation that maps one of its solutions to another. The symmetry transformations form the symmetry group of the ODE. In this paper, we focus on symmetry transformations that act solely on the phase space $X \ni \bm x$, without changing the independent variable $t$. We refer to these as \textit{time-independent} symmetries, formally defined as follows.
% \begin{definition}\label{def:ode-symm-simplified}
%     A symmetry group of the ODE \eqref{eq:ode-def1} is a local group of transformations $G$ acting on an open subset of the total space $T \times X$ that whenever $\bm x = \bm x(t)$ is a solution of \eqref{eq:ode-def1}, and whenever the transformed function $g \cdot \bm x$ is defined for $g \in G$, then $ \tilde {\bm x} = (g\cdot \bm x)( t)$ is also a solution of the system.
% \end{definition}

% % A group $G$ is called a symmetry group of the ODE if, for any $g \in G$, the action of $g$ on any solution of the ODE yields another solution of the same system.
% We note that the above definition does not specify how the transformed function $g \cdot \bm x$ is defined. An intuitive explanation is that the group element acts on each point in the function graph $\gamma_{\bm x} = \{(t, \bm x(t))\} \subset T \times X$, and the transformed function is defined by the transformed graph. We recommend reading through \cref{sec:prelim} for a more detailed discussion on the Lie point symmetry of ODE. For a more comprehensive understanding of these topics, the reader is referred to \citet{olver1993applications}.

\begin{definition}
\label{def:ode-ti-symm}
    A \textit{time-independent} symmetry group of the ODE \eqref{eq:ode-def1} is a group $G$ acting on $X$ such that whenever $\bm x = \bm x(t)$ is a solution of \eqref{eq:ode-def1}, $\tilde {\bm x} = g \cdot \bm x(t)$ is also a solution of the system.
\end{definition}

We refer the readers to \cref{sec:prelim} for background on Lie groups, group actions, and their applications on differential equations. We consider time-independent symmetries in \cref{def:ode-ti-symm} and demonstrate how to use these symmetries to derive equivariance constraints on equations.

% We are interested in these time-independent symmetries because they are closely related to the equivariance property of the flow map, which can be conveniently exploited to enforce symmetry constraints in the equation discovery process.
Given a fixed time $\tau$, the flow map associated with the ODE \eqref{eq:ode-def1} is denoted $\bm f_\tau: X \to X$, which maps a starting point in a trajectory $\bm x(t_0)$ to an endpoint $\bm x(t_0+\tau)$ after moving along the vector field $\bm h$ for time $\tau$. Formally, $\bm f_\tau(\bm x_0)=\bm x(\tau)$, where $\bm x(t)$ is the solution of the initial value problem
\begin{equation}
\label{eq:fl-def}
    \dot {\bm x}(t) = \bm h(\bm x),\ \bm x(0) = \bm x_0,\ t \in [0, \tau].
\end{equation}

\begin{proposition}
\label{prop:flow-map-equiv}
    Let $G$ be a group that acts on the phase space $X$ of the ODE \eqref{eq:ode-def1}. $G$ is a symmetry group of the ODE \eqref{eq:ode-def1} in terms of \cref{def:ode-ti-symm} if and only if for any $\tau \in T$, the flow map $\bm f_\tau$ is equivariant to the $G$-action on $X$.
\end{proposition}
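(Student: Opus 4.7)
The plan is to prove the two directions of the equivalence by unpacking the definitions and invoking the existence/uniqueness of solutions to the initial value problem \eqref{eq:fl-def}. Throughout, let $\bm x(t;\bm x_0)$ denote the unique solution of \eqref{eq:ode-def1} with $\bm x(0) = \bm x_0$, so that by definition $\bm f_\tau(\bm x_0) = \bm x(\tau;\bm x_0)$.

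For the forward direction ($\Rightarrow$), I would fix $g \in G$, $\bm x_0 \in X$, and $\tau \in T$, and consider the solution $\bm x(t) = \bm x(t;\bm x_0)$. By the symmetry hypothesis in \cref{def:ode-ti-symm}, the curve $t \mapsto g \cdot \bm x(t)$ is again a solution of \eqref{eq:ode-def1}, and it has initial value $g \cdot \bm x_0$. By uniqueness of solutions, this curve must equal $\bm x(t; g \cdot \bm x_0)$. Evaluating at $t=\tau$ gives
\begin{equation*}
    \bm f_\tau(g \cdot \bm x_0) = \bm x(\tau; g \cdot \bm x_0) = g \cdot \bm x(\tau;\bm x_0) = g \cdot \bm f_\tau(\bm x_0),
\end{equation*}
which is exactly the equivariance of $\bm f_\tau$.

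For the reverse direction ($\Leftarrow$), I would assume $\bm f_\tau$ is $G$-equivariant for every $\tau \in T$, and start from an arbitrary solution $\bm x(t)$ of \eqref{eq:ode-def1}. Defining $\bm y(t) = g \cdot \bm x(t)$, I need to verify $\bm y$ is itself a solution. The cleanest route is to note that for any $\tau$,
\begin{equation*}
    \bm y(\tau) = g \cdot \bm x(\tau) = g \cdot \bm f_\tau(\bm x(0)) = \bm f_\tau(g \cdot \bm x(0)) = \bm f_\tau(\bm y(0)),
\end{equation*}
so $\bm y(\tau)$ coincides with the flow of its own initial condition, hence $\bm y$ satisfies \eqref{eq:ode-def1}. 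This establishes the symmetry condition of \cref{def:ode-ti-symm}.

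I do not anticipate a serious obstacle: the argument is essentially a bookkeeping exercise around uniqueness of ODE solutions. The only subtlety worth flagging is that one must be a little careful about the domain of definition of $\bm f_\tau$ and of the solution $\bm x(t)$ — in the general (non-global) case, $\bm f_\tau$ is only defined on an open subset of $X$, and one uses invariance of these domains under the $G$-action. If the paper is working under standard assumptions (e.g.\ complete flows or a $G$-invariant open domain of definition, which is implicit in writing $\bm f_\tau: X \to X$), this subtlety vanishes, and the proof is a two-line application of uniqueness in each direction.
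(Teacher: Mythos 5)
Your proof is correct and follows essentially the same route as the paper's: both directions are handled by unpacking the definitions of the symmetry group and the flow map, with the forward direction identifying $g\cdot\bm x(t)$ as the solution through $g\cdot\bm x_0$ and the reverse direction showing the transformed curve agrees with the flow of its own initial condition. The only cosmetic difference is that the paper finishes the reverse direction by differentiating $\tilde{\bm x}(t+\tau)=\bm f_\tau(\tilde{\bm x}(t))$ in $\tau$ at $\tau=0$ to recover the ODE directly, whereas you appeal to the flow map being, by definition, the solution of the initial value problem; both are valid and rest on the same uniqueness considerations you flag.
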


% As $G$ is a symmetry group of the system \eqref{eq_pri}, for any $g \in G$ we have
% {\small
% \begin{align}
%     \frac{d}{dt} {\bm x} = \bm h (\bm x) &\Rightarrow \frac{d}{dt}(\Gamma_g(\bm x)) = \bm h ( \Gamma_g(\bm x)), \label{eq:h_not_yet_equiv}\\
%     \bm x (t+\tau) = \bm f_\tau (\bm x(t)) &\Rightarrow \Gamma_g(\bm x(t+\tau)) = \bm f_\tau ( \Gamma_g \bm x(t) ). \label{eq:f_equiv}
% \end{align}
% }

% \eqref{eq:f_equiv}  states that for any $\tau > 0$, $\bm f_\tau$ is $G$-equivariant under the action $\Gamma: G \times V \rightarrow V$.

As our dataset consists of trajectories measured at discrete timesteps, consider the discretized flow map $\bm x_{t+1} = \bm f( \bm x_t)$, where $\bm f \coloneqq \bm f_{\Delta t}$ depends on the sampling step size $\Delta t$ of the data. From \cref{prop:flow-map-equiv}, this function of moving one step forward in the trajectory is $G$-equivariant, i.e.
\begin{equation}
    \bm x_{t+1} = \bm f( \bm x_t) \Rightarrow g \cdot \bm x_{t+1} = \bm f( g \cdot \bm x_t).
    \label{eq:fd_equiv}
\end{equation}
% \vspace{-3mm}
% This adaptation is necessary when we rely on symmetry discovery methods to learn the unknown symmetry from the time-discrete data, as will be discussed in \cref{sec:symmdis}.

% It should be noted that the time-independent Lie point symmetry does not necessarily imply $G$-equivariance of the function $\bm h$. However, when the group action is linear, i.e. $g$ acts on $\bm x \in X$ by matrix multiplication, we have $\frac{d}{dt}(g \cdot \bm x) = g \cdot \frac{d\bm x}{dt}$ and therefore $\bm h$ is also $G$-equivariant.
% % \begin{equation}
% %     \dot{ \bm x } = \bm h (\bm x) \Rightarrow g \cdot \dot{ \bm x } = \bm h (g \cdot \bm x).
% %     \label{eq:h_equiv}
% % \end{equation}

% Symmetries with linear actions, e.g. rotation and scaling, are common in many tasks. When these symmetries are known, \eqref{eq:h_equiv} suggests that we can directly construct an equivariant model for the governing equation $\bm h$, as we will show in Section \ref{sec:esindy}.

\subsection{The Infinitesimal Formulae}
% \ry{Add some transition sentences to connect with the previous subsection} 
We can use the equivariance condition \eqref{eq:fd_equiv} to constrain the equation learning problem. Consider the Lie group of symmetry transformations  $G$ and the associated Lie algebra $\mf g$. We have the following equality constraints.

\begin{theorem}
\label{th:equiv-constraints}
    Let $G$ be a time-independent symmetry group of the ODE \eqref{eq:ode-def1}. Let $v \in \mathfrak g$ be an element in the Lie algebra $\mathfrak g$ of the group $G$. Consider the flow map $\bm f$ defined in \eqref{eq:fl-def} for a fixed time interval. Denote $J_{\bm f}$ and $J_{g}$ as the Jacobian of $\bm f(\bm x)$ and $g \cdot \bm x$ w.r.t $\bm x$. For all $\bm x \in X$ and $g = \exp (\epsilon v) \in G$, the following equalities hold:
    \begin{align*}
        \bm f(g \cdot \bm x) - g \cdot \bm f(\bm x) &= 0; \Label{eq:equiv-fg} &
        J_{\bm f}(\bm x)v(\bm x) - v (\bm f(\bm x)) &= 0; \Label{eq:equiv-fv} \\
        J_g(\bm x)\bm h(\bm x) - \bm h(g \cdot \bm x) &= 0; \Label{eq:equiv-hg} &
        J_{\bm h}(\bm x)v(\bm x) - J_{v}(\bm x)\bm h(\bm x) &= 0. \Label{eq:equiv-hv}
    \end{align*}
\end{theorem}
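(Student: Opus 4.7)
The plan is to treat \eqref{eq:equiv-fg} as the starting point and obtain the remaining three identities by differentiating it once, either with respect to the group parameter $\epsilon$ or with respect to time, so that each equation is essentially a ``linearization'' of the preceding equivariance statement.

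First I would establish \eqref{eq:equiv-fg}. This is nothing but flow map equivariance applied at the fixed time step corresponding to $\bm f$, and therefore follows immediately from \cref{prop:flow-map-equiv}. It will be important to note that, by the same proposition, the equivariance holds not only for $\bm f = \bm f_{\Delta t}$ but for the whole family $\{\bm f_\tau\}_{\tau \in T}$, since I will need to vary $\tau$ below.

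To obtain \eqref{eq:equiv-fv}, I would set $g = \exp(\epsilon v)$, recall that $v(\bm y) = \tfrac{d}{d\epsilon}\big|_{\epsilon=0} \exp(\epsilon v) \cdot \bm y$ is the infinitesimal generator of the $G$-action at $\bm y$, and differentiate \eqref{eq:equiv-fg} in $\epsilon$ at $\epsilon = 0$. The chain rule on the left gives $J_{\bm f}(\bm x)\,v(\bm x)$, and on the right it gives $v(\bm f(\bm x))$, matching the claim. For \eqref{eq:equiv-hg}, I would instead use the $\tau$-parameterized version $\bm f_\tau(g\cdot \bm x) = g \cdot \bm f_\tau(\bm x)$ and differentiate in $\tau$ at $\tau=0$. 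Since $\tfrac{d}{d\tau}\bm f_\tau(\bm y)|_{\tau=0} = \bm h(\bm y)$ and $\bm f_0 = \mathrm{id}$, the left side yields $\bm h(g\cdot \bm x)$ while the right side yields $J_g(\bm x)\,\bm h(\bm x)$, giving \eqref{eq:equiv-hg}.

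Finally, I would obtain \eqref{eq:equiv-hv} by differentiating \eqref{eq:equiv-hg} in $\epsilon$ at $\epsilon=0$: on the right, the chain rule produces $J_{\bm h}(\bm x)\,v(\bm x)$; on the left, since $J_{g}|_{\epsilon=0} = I$, the derivative lands on the Jacobian factor and produces $J_{v}(\bm x)\,\bm h(\bm x)$. Alternatively, the same identity falls out of differentiating \eqref{eq:equiv-fv} in $\tau$ at $\tau=0$, which gives a nice cross-check. The main obstacle I anticipate is not conceptual but bookkeeping: one must be careful that the two differentiations (in $\epsilon$ and in $\tau$) commute and that the evaluation points ($\bm f_0(\bm x) = \bm x$, $\exp(0\cdot v) = e$) are handled correctly so the ``bare'' Jacobians $J_v$ and $J_{\bm h}$ appear in the right slots; writing out the chain rule explicitly with the parametrizations $(\tau,\epsilon) \mapsto \bm f_\tau(\exp(\epsilon v)\cdot \bm x)$ and $(\tau,\epsilon)\mapsto \exp(\epsilon v)\cdot \bm f_\tau(\bm x)$ makes this routine.
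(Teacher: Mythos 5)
Your proposal is correct and follows essentially the same route as the paper: establish \eqref{eq:equiv-fg} from \cref{prop:flow-map-equiv} and derive the other three identities by differentiating the finite equivariance in the group parameter $\epsilon$ and/or the flow time $\tau$ at zero. The only cosmetic difference is that for \eqref{eq:equiv-hv} the paper differentiates \eqref{eq:equiv-fv} in $\tau$ rather than \eqref{eq:equiv-hg} in $\epsilon$, which you correctly identify as an equivalent order of taking the same mixed derivative.
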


% \begin{theorem}[Theorem 3, \citet{otto2023unified}]
% \label{th:inf_constraint}
% Let $f: X \rightarrow X$ be an equivariant function to a Lie group $G$. Let $v \in \mathfrak{g}$ be an element in the Lie algebra $\mf g$ of the group $G$. We have
% \begin{equation}
%     J_f(\cdot)v(\cdot) - (v \circ f)(\cdot) = 0
%     \label{eq:inf_constraint}
% \end{equation}
% \end{theorem}

To understand \cref{th:equiv-constraints}, \eqref{eq:equiv-fg} follows directly from \cref{prop:flow-map-equiv}, and the other equations are the infinitesimal equivalent of this equivariance condition. For example, the Jacobian-vector product in \eqref{eq:equiv-fv} reveals how much $\bm f$ changes when the input is infinitesimally transformed by $v$, and $v (\bm f(\bm x))$ indicates the amount of change caused by the infinitesimal action on the output of $\bm f$. \footnote{Note that $v (\bm f(\bm x))$ means evaluating the vector field at $\bm f(\bm x) \in X$. It is different from $v(\bm f)(\bm x)$, which is the Lie derivative of $\bm f$ at the point $\bm x$.} The difference between these two terms vanishes if the function is equivariant.
On the other hand, as both $\bm h$ and $v$ can be viewed as vector fields on $X$, their roles are interchangeable in the equivariance formula. Therefore, we can also express the equivariance in terms of the flow $\bm h$ and a finite group element $g = \exp (\epsilon v)$. Finally, we can also consider the fully infinitesimal formula \eqref{eq:equiv-hv}, which indicates that $\bm h$ and $v$ commute as vector fields. The proof of \cref{th:equiv-constraints} may be found in \cref{sec:proof-symm}.

\Cref{th:equiv-constraints} provides the theoretic basis for our symmetry-based equation discovery methods in \cref{sec:eed}. We can enforce the equality constraints in \cref{th:equiv-constraints} by solving the constraints explicitly (\cref{sec:esindy}) or adding them as regularization terms (\cref{sec:symmreg}).

% suggests that the information about Lie group equivariance is encoded in the Lie algebra. Once we find a vector space basis for Lie algebra $\mf g$, we can use the basis to constrain the function to be learned, i.e. $\bm h$ in our problem.

\section{Symmetry-Informed Governing Equation Discovery}\label{sec:eed}
Next, we discuss how to leverage the knowledge of symmetry in equation discovery. Depending on the nature of different symmetries, we develop a holistic pipeline for incorporating symmetries into various equation discovery algorithms, shown in \cref{fig:workflow}.
We introduce the components in the following subsections.

\begin{figure}[t!]
    \centering
    \includegraphics[width=\textwidth, viewport=55 52 885 520, clip]{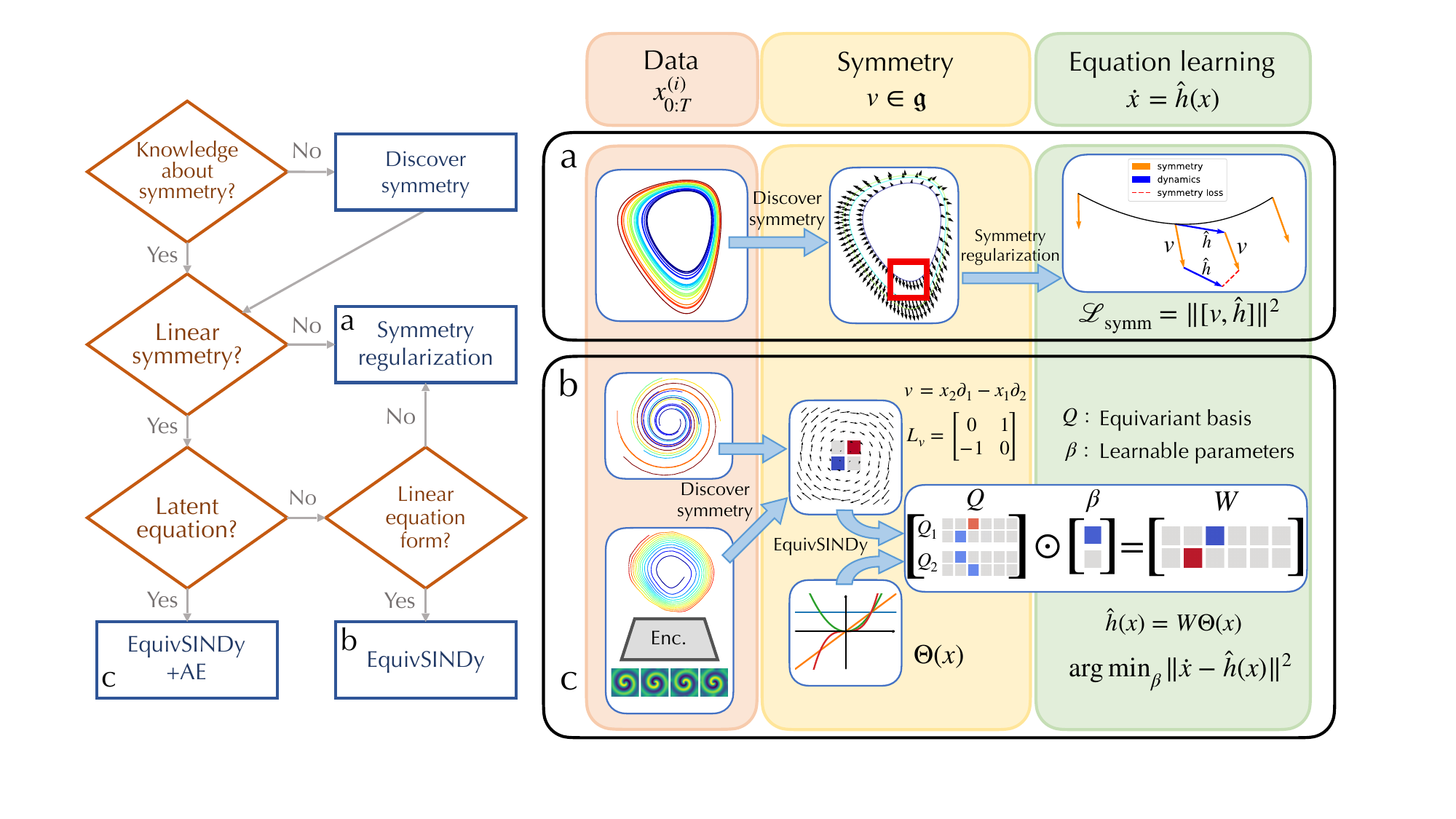}
    % \vspace{-6mm}
    \caption{Pipeline for incorporating symmetries into equation discovery via solving linear symmetry constraint (Section \ref{sec:esindy}), regularization (Section \ref{sec:symmreg}) and symmetry discovery (Section \ref{sec:symmdis}). Given the trajectory data from the dynamical system, we first identify its symmetry based on prior knowledge or symmetry discovery techniques. We then enforce the symmetry by solving a set of constraints when possible and otherwise promote the symmetry through regularization.}
    \label{fig:workflow}
    % \vspace{-4mm}
\end{figure}

% \vspace{-1mm}
\subsection{\texttt{Equiv-c}: Solving the Linear Symmetry Constraint}\label{sec:esindy}
We start from the case of linear symmetries.
Many ODEs exhibit symmetries with linear actions on the state variable $\bm x$. For example, in classical mechanics, the motion of a particle under a central force is invariant under rotations about the center. For such linear symmetries,
if the governing equations can be written as linear combinations of basis functions as in SINDy sparse regression \citep{brunton16}, the infinitesimal formula \eqref{eq:equiv-hv} becomes a linear constraint. The constraint can be explicitly solved to construct an equivariant model for $\bm h$.

% We first address the case where  \eqref{eq:inf_constraint} becomes a linear constraint that can be explicitly solved to construct an equivariant model for $f$. This requires a linear group action and  the SINDy \cite{brunton16} algorithm for equation discovery, which assumes that 

In particular, suppose the governing equations can be written in terms of  $\Theta(\bm x) \in \mathbb R^p$ as:
\begin{equation}
    \bm h( \bm x ) = W \Theta (\bm x)
    \label{eq:sindy}
\end{equation}
where $W \in \mathbb R^{d \times p}$ is the learnable coefficient matrix. In SINDy, the function library $\Theta(\bm x)$ is pre-defined. For example,  $\Theta$ can be the set of all polynomials up to second order for a 2-dimensional system, i.e. $\Theta(x_1, x_2) = [1, x_1, x_2, x_1^2, x_1x_2, x_2^2]^T$.

Our goal is to solve for the parameter $W$ that makes \eqref{eq:equiv-hv} hold for any $\bm x$. To this end, we define the symbolic map $M_{{\Theta}} : (\mathbb{R}^d \rightarrow \mathbb{R}) \rightarrow \mathbb{R}^{p}$, which maps a function to its coordinate in the function space spanned by $\Theta$. Specifically, $M_\Theta(f_j) = \bm c$ for $f_j(\bm x) = \bm c^T\Theta(\bm x)$, $\bm c \in \R^p$. For a multivariate function $f: \R^d \to \R^n$, we compute $M_\Theta$ for each of its components and stack them into $M_\Theta(f) \in \R^{n \times p}$. A concrete example is provided in \cref{sec:M-Theta}. We have the following proposition.
\begin{proposition}
\label{prop:sindy-linear-symm}
    Let $G$ be a time-independent symmetry group of the ODE $\dot{\bm x} = W \Theta(\bm x)$ with linear actions. Let $v \in \mathfrak g$ be an Lie algebra element with action $X$ by $v(\bm x) = L_v \bm x$, $L_v \in \R^{d\times d}$. Then, $L_v W = W M_\Theta ( J_\Theta(\cdot) L_v(\cdot) )$, where $J_\Theta$ denotes the Jacobian of $\Theta$.
\end{proposition}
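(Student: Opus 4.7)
The plan is to derive the claim directly from the fully infinitesimal equivariance formula \eqref{eq:equiv-hv} by substituting the specific forms of $\bm h$ and $v$ given in the proposition, and then reading off the matrix identity using the symbolic coordinate map $M_\Theta$.

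First, I would instantiate \eqref{eq:equiv-hv} with the hypotheses. Since $\bm h(\bm x) = W\Theta(\bm x)$, the chain rule gives $J_{\bm h}(\bm x) = W\, J_\Theta(\bm x)$. Since the action of $v$ is linear, $v(\bm x) = L_v \bm x$ and hence $J_v(\bm x) = L_v$ is constant. Plugging into \eqref{eq:equiv-hv} yields the pointwise identity
\begin{equation*}
    W\, J_\Theta(\bm x)\, L_v\, \bm x \;=\; L_v\, W\, \Theta(\bm x) \qquad \text{for all } \bm x \in X.
\end{equation*}
This is a functional equation in which both sides are vector-valued functions of $\bm x$; the right-hand side is already expressed in the basis $\Theta$ with coefficient matrix $L_v W$, so the task reduces to expressing the left-hand side in the same basis.

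Next, I would invoke the definition of the symbolic coordinate map $M_\Theta$ applied componentwise to the vector-valued function $\bm x \mapsto J_\Theta(\bm x) L_v \bm x$ (which takes values in $\mathbb{R}^p$). By the defining property of $M_\Theta$, the matrix $C \coloneqq M_\Theta\bigl(J_\Theta(\cdot)\, L_v(\cdot)\bigr) \in \R^{p \times p}$ satisfies $J_\Theta(\bm x)\, L_v\, \bm x = C\, \Theta(\bm x)$ for all $\bm x$. The only potential obstacle here is ensuring that each component of $J_\Theta(\bm x) L_v \bm x$ actually lies in the span of $\Theta$, but this is guaranteed whenever $\Theta$ is rich enough to support the symmetry, which is an implicit assumption of the sparse regression setup (and is automatic for, e.g., the full polynomial library when the action is linear). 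Substituting, the equivariance identity becomes
\begin{equation*}
    W\, C\, \Theta(\bm x) \;=\; L_v\, W\, \Theta(\bm x) \qquad \text{for all } \bm x \in X.
\end{equation*}

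Finally, I would conclude by linear independence of the library. Assuming the basis functions in $\Theta$ are linearly independent over $X$ (as is standard for a well-posed SINDy library), the equality of these two linear combinations for all $\bm x$ forces equality of the coefficient matrices, giving $L_v W = W\, C = W\, M_\Theta\bigl(J_\Theta(\cdot)\, L_v(\cdot)\bigr)$, as claimed. The main conceptual step — and the one that feels least automatic — is the bookkeeping with $M_\Theta$, so I would include a small worked example (or point to \cref{sec:M-Theta}) to make the bilinear structure transparent; the rest is just chain rule and linear independence.
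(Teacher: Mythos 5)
Your proposal is correct and follows essentially the same route as the paper's proof: substitute $\bm h = W\Theta$ and $J_v = L_v$ into \eqref{eq:equiv-hv} to get the pointwise identity $L_vW\Theta(\bm x) = WJ_\Theta(\bm x)L_v\bm x$, then apply $M_\Theta$ to both sides to read off the coefficient matrices. Your explicit appeal to linear independence of the library and your flagging of the closure requirement on $\Theta$ are just slightly more careful versions of what the paper does via ``$M_\Theta(\Theta)=I_p$'' and its separate proposition on polynomial libraries.
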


\begin{proof}
    Substituting \eqref{eq:sindy} into \eqref{eq:equiv-hv}, we obtain $J_v(\bm x)W \Theta (\bm x)=W J_\Theta (\bm x) v(\bm x), \ \forall \bm x \in X$. As $v$ acts on $X$ linearly, i.e. $v(\bm x) = L_v \bm x$, the constraint is equivalent to $L_v W \Theta (\bm x)=W J_\Theta (\bm x) L_v \bm x$.

    We then apply $M_\Theta$ to both sides of the equation. Obviously, $M_\Theta(\Theta) = I_p$. Since \eqref{eq:equiv-hv} is true for all $\bm x$, we have $L_v W = W M_\Theta ( J_\Theta(\cdot) L_v(\cdot) )$.
\end{proof}

To calculate $M_\Theta$, we need to ensure that $J_\Theta(\bm x) L_v \bm x$ is still within the span of the functions in the given $\Theta(\bm x)$. This is true when ${\Theta}(\bm x)$ is the set of all polynomials up to a certain degree, which we prove in \cref{sec:M-Theta}.
\begin{proposition}
The components of \( J_{\Theta}(\bm{x}) L_v \bm{x} \in \R^p \) can be written as linear combinations of the terms in $\Theta(\bm{x})$ if $\Theta(\bm{x})$ is the set of all polynomials up to degree $q \in \mathbb{Z}^{+}$.
% , so $M_\Theta(J_\Theta(\cdot) L_v(\cdot))$ is well-defined.
\end{proposition}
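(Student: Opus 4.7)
The plan is to argue purely by degree counting. Write $\Theta(\bm x)$ as the stacked vector of all monomials $\bm x^\alpha = x_1^{\alpha_1} \cdots x_d^{\alpha_d}$ indexed by multi-indices $\alpha$ with $|\alpha| \le q$. The proposition will follow once I show that each component of $J_\Theta(\bm x) L_v \bm x$ is a polynomial in $\bm x$ of total degree at most $q$, since the space of such polynomials is, by definition, exactly the linear span of the entries of $\Theta(\bm x)$.

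First I would write out the row of $J_\Theta(\bm x)$ associated to a monomial $\bm x^\alpha$: its $i$-th entry is $\partial_i \bm x^\alpha = \alpha_i x_i^{\alpha_i - 1}\prod_{k\neq i} x_k^{\alpha_k}$, which is either $0$ or a monomial of total degree $|\alpha| - 1$. Next I would use that $(L_v \bm x)_i = \sum_j (L_v)_{ij}\, x_j$ is a homogeneous polynomial of degree $1$. Taking the inner product that defines the $\alpha$-component of $J_\Theta(\bm x) L_v \bm x$, namely
\begin{equation*}
\bigl(J_\Theta(\bm x) L_v \bm x\bigr)_\alpha = \sum_{i,j} \alpha_i (L_v)_{ij}\, x_j \, x_i^{\alpha_i - 1}\prod_{k\neq i} x_k^{\alpha_k},
\end{equation*}
I observe that every surviving summand is a monomial of degree exactly $|\alpha| \le q$.

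Hence the $\alpha$-component is a (finite) linear combination of monomials of degree at most $q$, and therefore lies in the span of the entries of $\Theta(\bm x)$, which completes the argument. The conclusion extends coordinate-wise to the full vector $J_\Theta(\bm x) L_v \bm x \in \mathbb{R}^p$, so the symbolic map $M_\Theta$ invoked in \cref{prop:sindy-linear-symm} is well-defined on this expression.

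I do not anticipate a real obstacle here; the statement is essentially a closure-under-degree observation. The only subtlety worth flagging explicitly is that the component $\partial_i \bm x^\alpha$ can vanish (when $\alpha_i = 0$), but this is harmless since a zero term is trivially in the span. If a cleaner presentation is desired, one could alternatively note that polynomials of degree $\le q$ form a subspace closed under the first-order linear differential operator $\sum_{i,j} (L_v)_{ij} x_j \partial_i$, and then recognize the $\alpha$-component of $J_\Theta(\bm x) L_v \bm x$ as the image of $\bm x^\alpha$ under precisely this operator.
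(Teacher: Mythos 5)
Your argument is correct and is essentially the same degree-counting proof the paper gives: differentiation of a degree-$\le q$ monomial drops the degree by one, multiplication by the linear (degree-one) entries of $L_v\bm x$ raises it back by one, so every component of $J_\Theta(\bm x)L_v\bm x$ stays in the span of all polynomials of degree at most $q$. Your version is in fact slightly more explicit (working monomial-by-monomial with multi-indices and flagging the vanishing case $\alpha_i=0$), but it is the same route.
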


Once we have the infinitesimal action $L_v$ and an appropriate function library $\Theta$, $M_\Theta$ can be computed in a purely symbolic way using \texttt{sympy} \citep{meurer2017sympy}. Denoting $M_{\Theta, L_v} \coloneqq M_\Theta ( J_\Theta(\cdot) L_v(\cdot) )$ and vectorizing the matrix $W$, we can rewrite the equation in \cref{prop:sindy-linear-symm} as the following linear constraint on $W$:
\begin{equation}
    (- M_{\Theta, L_v}^T \ksum L_v) \mathrm{vec}(W) = 0
\end{equation}
where $\ksum$ is the Kronecker sum: $A \ksum B = A \otimes I + I \otimes B$. We concatenate the constraints for all representations $L_{v_i}$ of the Lie algebra basis $\{ v_i \}_{i=1}^c$ into a single matrix $C$ and solve the constraint using the singular value decomposition:
\begin{align}
\label{eq:constraint_sindy_total}
      C\,\mathrm{vec}(W) &= \begin{bmatrix}
      - M_{\Theta, L_{v_1}}^T \ksum L_{v_1} \\
      \ldots \\
      - M_{\Theta, L_{v_c}}^T \ksum L_{v_c}
      \end{bmatrix}
      \mathrm{vec}(W)  \nonumber\\[1mm]
      &
      = U \begin{bmatrix}
      \Sigma & 0\\
      0 & 0\\
      \end{bmatrix}
      \begin{bmatrix}
      P^T \\
      Q^T
      \end{bmatrix}
      \mathrm{vec}(W) = 0.
\end{align}

The coefficient matrix resides in the $r$-dimensional nullspace of $C$ and can be parametrized as $\mathrm{vec}(W) = Q \beta,\, \beta \in \mathbb R^r$. This equivariant parametrization significantly reduces the number of free parameters from $d \times p$ to $r$, leading to parsimonious equations and easier training in a compressed parameter space. We name this approach as \texttt{Equiv}SINDy\texttt{-c}, where \texttt{c} stands for \textbf{c}onstraint.

The above procedure uses linear symmetries to inform sparse regression. However,  when the data is high-dimensional, such as videos,  linear symmetries can become too restrictive. For these scenarios, SINDy autoencoder \citep{champion19} maps the data to a latent space to discover the coordinate system and the governing equation under that system. We show in the experiment section that our model can also be used in this case by enforcing the latent equation to be equivariant. With the inductive bias of symmetry, the resulting coordinate and equation can achieve better long-term prediction accuracy.

\subsection{\texttt{Equiv-r}: Regularization for General Symmetry}\label{sec:symmreg}
The above procedure, \texttt{Equiv}SINDy\texttt{-c}, applies to linear symmetry with a proper choice of function library $\Theta$. Solving the equality constraint in \cref{prop:sindy-linear-symm} for arbitrary group action is generally more challenging. Technically, when the infinitesimal action $v$ is expressed in a closed form, we can still apply $M_\Theta$ to both sides of the equation and obtain
$
    M_\Theta(v(W\Theta(\cdot))) = WM_\Theta(J_\Theta(\cdot)v(\cdot))
$.
However, the function library $\Theta$ needs to be carefully chosen based on the specific action $v$ to ensure that $M_\Theta$ is evaluated on functions within the span of $\Theta(\bm x)$. Moreover, as we will discuss in \cref{sec:symmdis}, we may rely on symmetry discovery methods to learn unknown symmetries parametrized by neural networks. Without a closed-form symmetry, calculating $M_\Theta$ is intractable. 

We propose an alternative approach that is universally applicable to any symmetry and equation discovery algorithm.
We use the formula from \cref{th:equiv-constraints} as a regularization term to promote symmetry in the learned equation. This symmetry loss is added to the equation loss from the base equation discovery algorithm, e.g. $L_2$ error between the estimated time derivative and the prediction from the learned equation.

We consider the following relative loss based on infinitesimal group action $v$ and a finite-time flow map $\bm f_\tau$ of the equation as in \eqref{eq:equiv-fv}:
\begin{equation}
    \mc{L}_\text{symm} = \mathbb E_{\bm x} \left[ \sum_{v \in B(\mf g)} \frac{\| J_{\bm f_\tau}(\bm x) v(\bm x) - v(\bm f_\tau(\bm x))\|^2}{\| J_{\bm f_\tau}(\bm x) v(\bm x) \| ^2} \right]
    \label{eq:igfe}
\end{equation}

where $B(\mf g)$ denotes the basis of the Lie algebra, $\bm f_\tau$ is obtained by solving the initial value problem \eqref{eq:fl-def}, and $J_{\bm f_\tau}$ denotes the Jacobian of $\bm f_\tau$.

% We adopt this form of regularization for several reasons. First, instead of directly substituting the learned equation $\bm h$ into  \eqref{eq:inf_constraint}, we use the integrated flow map $\bm f_\tau$. This is because the equivariance exists for $\bm f_\tau$ but not $\bm h$ under a nonlinear group action $\Gamma$ (and the induced infinitesimal action $v$).  One alternative solution is to differentiate between the space of observations $\bm x_t \in V$ and time derivatives $\dot {\bm x}_t \in W$. Although $V$ and $W$ are the same vector space, they are acted on by the symmetry group $G$ through different actions, $\Gamma_V$ and $\Gamma_W$. In this case, $\bm h: V \rightarrow W$ becomes equivariant in the sense that $J_{\bm h}(\bm x) v_V(\bm x) = (v_W \circ \bm h)(\bm x)$. However, it may be inconvenient to derive $\Gamma_W$ from $\Gamma_V$, especially when we rely on a neural network-parametrized model to discover the symmetry, as in Section \ref{sec:symmdis}. Therefore, we align the learned function to the group action instead of the other way around.

We use a relative loss because the scale of both terms in the numerator and their difference is subject to the specific $\bm f_\tau$. In the extreme case when $\bm h = \bm 0$, both terms are zero because $\bm f_\tau(\bm x)$ is constant. As we are optimizing $\bm h$ under this objective, we compute the relative error to eliminate the influence of the level of variations in the function itself on the symmetry error.

We can also introduce other forms of symmetry regularizations based on the other equations in \cref{th:equiv-constraints}. Empirically, these regularization terms perform similarly, but their implementations have subtle differences. For instance, some loss terms require integrating the learned equations to get the flow map $\bm f_\tau$ and thus incur additional computational cost; some involve computing higher-order derivatives for infinitesimal symmetries, which amplifies the numerical error when the symmetry is not exactly accurate. We list these different options of symmetry regularization and compare them in more detail in \cref{sec:supp-symmreg}.
% , listed as follows.
% {\small
% \begin{align}
%     \mc{L}_\text{FGFE} &= \mathbb E_{\bm x} \Big[ \sum_{v \in B(\mf g)} \frac{\| \bm f_\tau(g\cdot\bm x) - g\cdot \bm f_\tau(\bm x)\|^2}{\| \bm f_\tau(g\cdot\bm x) - \bm f_\tau(\bm x) \| ^2} \Big]
%     \label{eq:fgfe} \\
%     \mc{L}_\text{FGIE} &= \mathbb E_{\bm x} \Big[ \sum_{v \in B(\mf g)} \frac{\| J_g(\bm x) \bm h(\bm x) - \bm h (g\cdot\bm x)\|^2}{ \| J_g(\bm x) \bm h(\bm x) \|^2 } \Big]
%     \label{eq:fgie} \\
%     \mc{L}_\text{IGIE} &= \mathbb E_{\bm x} \Big[ \sum_{v \in B(\mf g)} \frac{\| J_v(\bm x) \bm h(\bm x) - J_{\bm h}(\bm x) v(\bm x)\|^2}{ \| J_v(\bm x) \bm h(\bm x) \|^2 } \Big]
%     \label{eq:igie}
% \end{align}}

% where $g=\exp(\epsilon v)$ is computed for some fixed scale parameter $\epsilon$.
% % The motivation for using the relative formula is the same as described for \eqref{eq:igfe}.

% While the corresponding constraints in \cref{th:equiv-constraints} of these losses are all necessary conditions for symmetry, these loss terms have different implementations that may bring certain advantages or disadvantages depending on the application, e.g. the representation of symmetry and the algorithm for equation discovery. We will compare these losses in more detail in \cref{sec:supp-symmreg}. Empirically, we find these options improve the equation discovery performance similarly except for \eqref{eq:igie}, and we use $\mc L_\text{IGFE}$ \eqref{eq:igfe} by default.

The use of symmetry regularization encourages equation discovery models to achieve lower symmetry error, instead of explicitly constraining the parameter space as in \cref{sec:esindy}. However,
it is a general approach that can be applied to various equation discovery algorithms, e.g. sparse regression and genetic programming. Similar to the works that utilize the variational form of ODEs \citep{messenger2021weaksindy, qian2022dcode}, our symmetry regularization does not require estimating the time derivative, so it is more robust to noise. We name this approach as \texttt{Equiv}SINDy\texttt{-r} (or \texttt{Equiv}GP\texttt{-r} for genetic programming-based discovery algorithms), \texttt{r} standing for \textbf{r}egularization.

\subsection{Equation Discovery with Unknown Symmetry}\label{sec:symmdis}
The knowledge of ODE symmetries is often not accessible when we do not know the equations. In this case, we can use symmetry discovery methods \citep{dehmamy21, liu2022machine, yang23,laligan, otto2023unified} to first learn the symmetry from data and then use the discovered symmetries to improve equation discovery. 
% These methods can be incorporated into our framework as long as we can extract the (infinitesimal) action of their learned symmetries.

In our experiments, we use LaLiGAN \citep{laligan} to learn nonlinear actions of a Lie group $G \leq \GL(k; \R)$ as ${\exp(\epsilon v)}x = (\psi \circ \exp(\epsilon L_v) \circ \phi)(x)$, where the networks $\psi$ and $\phi$ define an autoencoder and $L_v \in \R^{k\times k}$ is the matrix representation of $v \in \mathfrak g$.
% In other words, it learns a latent space where the symmetry acts linearly.
Our dataset consists of ODE trajectories sampled at a fixed rate $\Delta t$, each given by $\bm x_{0:T}$. We can learn the equivariance of $\bm f: \bm x_t \mapsto \bm x_{t+1}$ by feeding the input-output pairs $(\bm x_i, \bm x_{i+1})$ into LaLiGAN. We extract the infinitesimal action as
\begin{equation}
    v(\bm x) = \frac{d}{d\epsilon}(\psi \circ \exp(\epsilon L_v) \circ \phi)(\bm x) \Big|_ {\epsilon = 0} 
             = J_\psi (\bm z) L_v \bm z
    \label{eq:laligan-inf-action}
\end{equation}
where $\bm z = \phi (\bm x)$ is the latent mapping via encoder network, and the Jacobian-vector product can be obtained by automatic differentiation. This infinitesimal action is then used to compute the regularization \eqref{eq:igfe}.
% \eqref{eq:fgfe} and \eqref{eq:fgie} can be computed by directly passing through the encoder-linear-action-decoder architecture. \eqref{eq:igie} can be computed by further differentiating \eqref{eq:laligan-inf-action} to obtain $J_v(\bm x)$.

Alternatively, we can also discover equations in the latent space of LaLiGAN by solving the equivariance constraints in Section \ref{sec:esindy}, as the latent dynamics $\bm z_t \mapsto \bm z_{t+1}$ has linear symmetry $L_v$. Discovering governing equations in the latent space can be helpful when the dataset consists of high-dimensional observations instead of low-dimensional state variables.

We should note that our method has a completely different goal from LaLiGAN. Our method aims to discover equations using symmetry as an inductive bias. LaLiGAN aims to discover unknown symmetry. Only when we do not know the symmetry a priori, we use LaLiGAN as a tool to discover the symmetry first and use the discovered symmetry to regularize equation learning.

\section{Experiments}
\label{sec:exp}
The experiment section is organized as follows. In \cref{sec:esindy_exp}, we will demonstrate how to solve the linear symmetry constraints (\cref{sec:esindy}) on some synthetic equations. In \cref{sec:esindyae_exp}, we will apply the same equivariant model to discover equations in a symmetric latent space for some high-dimensional observations. In \cref{sec:symmreg_exp}, we consider two well-studied benchmark problems for dynamical system inference under significant noise levels, where we use the regularization technique (\cref{sec:symmreg}) based on learned symmetries.

\paragraph{Data generation.}
For each ODE system, we sample $N$ random initial conditions $\bm x_0$ from a uniform distribution on a specified subset of $X \subseteq \R^d$. Starting from each initial condition, we integrate the ODE using the 4th-order Runge-Kutta (RK4) method and sample with a regular step size $\Delta t$ to get the discrete trajectory $\bm x_{0:T}$. We use different $N$, $T$ and $\Delta t$ for each system, as specified in Appendix \ref{sec:exp-detail}.

Then, unless otherwise specified, we add white noise to each dimension of the state $\bm x$ at a noise level $\sigma_R$. The scale of the noise depends on the variance in the data within each state dimension $x_i$: $\sigma_i = \sigma_R \cdot \mathrm{std} (x_i)$. We will report the noise level $\sigma_R$ for each system. After adding the noise, we apply a Gaussian process-based smoothing procedure on the noisy data, similar to \citet{qian2022dcode}.

\paragraph{Evaluation metrics.}
We run each algorithm multiple times and report the \textit{success probability} of discovering the correct function form. Specifically, assume the true equation is expanded as $\sum\theta_{f_i}f_i(\bm x)$, where $\theta_{f_i}$ is a nonzero constant parameter and $f_i$ is a function of $\bm x$. Also, we expand the discovered equation as $\sum\hat\theta_{f_i}\hat f_i(\bm x)$, where $\hat\theta_{f_i}\not=0$. The discovery is considered successful if all the terms match, i.e. $\{f_i\} = \{ \hat f_i \}$. The probability is computed as the proportion of successful runs. As each dynamical system consists of multiple equations (one for each dimension), we evaluate the success probability of discovering each individual equation (e.g. "Eq. 1", "Eq. 2" columns in \cref{tab:dosc}) and the joint  probability of successfully discovering all the equations (e.g. "All" column in \cref{tab:dosc}).
We argue that this is the most important metric for evaluating the performance of an equation discovery algorithm, since an accurate equation form reveals the key variables and their interactions, and thus the underlying structure and relationships governing the system.

The following metrics are also considered:
\begin{itemize}
% [leftmargin=6mm]
    \item \textit{RMSE} for parameter estimation $\sqrt{\sum\|\bm \theta - \hat{\bm\theta}^{(k)}\|^2/K}$, where $K$ is the number of runs, $\bm\theta = (\theta_{f_1}, \theta_{f_2}, ...)$ is the constant parameters in true equation terms as defined above, and $\hat{\bm\theta}^{(k)}$ the estimated parameters of the corresponding terms from the $k$th run. If $f_i$ is not in the learned equation, $\hat \theta_{f_i}=0$. We evaluate this metric on (1) all runs and (2) successful runs, referred to as \textit{RMSE (all)} and \textit{RMSE (successful)}. Also, for each system, this metric is computed for each individual equation and all equations.
    \item \textit{Long-term prediction error}. We integrate the learned ODE from an initial condition sampled from the test set to predict the future trajectory. We evaluate the error between the predicted and the true trajectory at chosen timesteps.
\end{itemize}

\paragraph{Algorithms and Baselines.}
For baseline comparisons, we include SINDy sparse regression \citep{brunton16} and Genetic Programming (GP) with \texttt{PySR} package \citep{cranmer2023interpretable}, and their respective variants using the variational formulation, i.e. Weak SINDy (WSINDy) \citep{messenger2021weaksindy} and D-CODE \citep{qian2022dcode}.
Based on different experiment setups and the guidelines in \cref{fig:workflow}, we apply our methods referred to as \texttt{Equiv}SINDy / \texttt{Equiv}GP, where the suffix \texttt{c} stands for solving the linear symmetry constraint, and \texttt{r} stands for using the symmetry regularization.

% \yjk{The following subsection should start at the top of page 8 for best formatting.}
\subsection{Equivariant SINDy for Linear Symmetries} \label{sec:esindy_exp}
\begin{wrapfigure}{r}{6cm}
\vspace{-9mm}
    \centering
    \begin{subfigure}{.52\textwidth}
        \includegraphics[width=\textwidth]{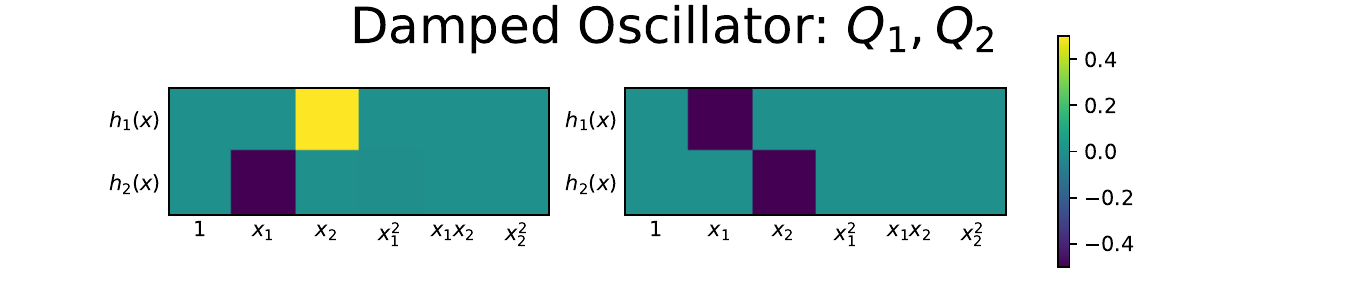}
    \end{subfigure}
    \begin{subfigure}{.26\textwidth}
        \includegraphics[width=\textwidth]{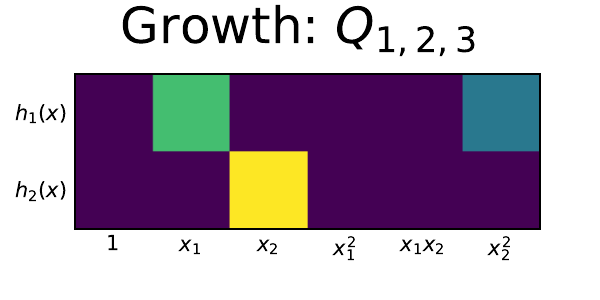}
    \end{subfigure}
    \caption{Solutions for equivariant constraints of \mref{eq:dosc,eq:growth}.
    In \eqref{eq:dosc}, the 2D parameter space is spanned by $Q_1$ and $Q_2$. In \eqref{eq:growth}, the 3D parameter space is spanned by $Q_{1,2,3}$, each marked with a different color.}
    \label{fig:esindy}
    \vspace{-4mm}
\end{wrapfigure}

In this section, we consider two dynamical systems which possess some linear symmetries. These symmetries are commonly observed across many systems and their presence can be easily detected. Therefore, we assume we know these symmetries and focus on symmetry-informed equation discovery by solving the symmetry constraints.

\paragraph{Damped Oscillator.} 
This is a two-dimensional system with rotation symmetry, characterized by the infinitesimal generator $v = x_2\partial_1 - x_1\partial_2$ (i.e. $\begin{bmatrix}
    0 & 1 \\ -1 & 0
\end{bmatrix}$ in matrix terms).
% , or, in its matrix form, $L_v = [0,\, 1;\, -1,\, 0]$
It is governed by \eqref{eq:dosc}.

\vspace{-2mm}
{\small
\begin{align*}
\left\{
\begin{aligned}
\dot{x}_1 &= -0.1x_1 - x_2 \\
\dot{x}_2 &= x_1 - 0.1x_2
\end{aligned}
\right. \ \Label{eq:dosc}
&
\left\{
\begin{aligned}
\dot{x}_1 &= -0.3x_1 + 0.1 x_2^2 \\
\dot{x}_2 &= x_2
\end{aligned}
\right. \Label{eq:growth}
\end{align*}}

\paragraph{Growth.} This system exhibits a scaling symmetry under $(x_1, x_2) \mapsto (a^2x_1, ax_2)$. The corresponding infinitesimal generator is
% $L_v=[2,\,0;\,0,\,1]$
$v = 2x_1\partial_1 + x_2\partial_2$. It is governed by \eqref{eq:growth}.

% {\small
% \begin{equation}
% \label{eq:growth}
% \left\{
% \begin{aligned}
% \dot{x}_1 &= -0.3x_1 + 0.1 x_2^2 \\
% \dot{x}_2 &= x_2
% \end{aligned}
% \right.
% \end{equation}}

\begin{wraptable}{r}{8cm}
\small
\vspace{-4.55mm}
    \caption{Success probability of equation discovery on the damped oscillator \eqref{eq:dosc} at noise level $\sigma_R = 20\%$ and the growth system \eqref{eq:growth} at $\sigma_R = 5\%$, computed from 100 runs for each algorithm. See \cref{sec:supp-esindy} for full results.}
    \label{tab:dosc}
    \centering
    \begin{tabular}{ccccc}
    \hline
        \multirow{2}{4em}{System} & \multirow{2}{4em}{Method} & \multicolumn{3}{c}{Success prob.} \\
        & & Eq. 1 & Eq. 2 & All \\
        \hline
        \multirow{5}{6em}{Oscillator \eqref{eq:dosc}} & GP & 0.00 & 0.70 & 0.00 \\
        & D-CODE & 0.00 & 0.00 & 0.00 \\
        & SINDy & 0.35 & 0.38 & 0.15 \\
        & WSINDy & 0.06 & 0.07 & 0.00 \\
        & \texttt{Equiv}SINDy\texttt{-c} & \textbf{0.93} & \textbf{0.97} & \textbf{0.90}\\
        \hhline{=====}
        \multirow{5}{6em}{Growth \eqref{eq:growth}} & GP & 0.00 & \textbf{1.00} & 0.00 \\
        & D-CODE & 0.00 & 0.65 & 0.00 \\
        & SINDy & 0.26 & 0.13 & 0.03 \\
        & WSINDy & 0.00 & 0.00 & 0.00 \\
        & \texttt{Equiv}SINDy\texttt{-c} & \textbf{1.00} & \textbf{1.00} & \textbf{1.00}\\
        \hline
    \end{tabular}
    \vspace{-6mm}
\end{wraptable}

\cref{fig:esindy} visualizes the parameter spaces for these two systems under the equivariance constraint. The original parameter space is $W \in \R^{2 \times 6}$ as we build the function library $\Theta$ with up to second-order polynomials. For \eqref{eq:dosc}, the equivariance constraint reduces the parameter space to two dimensions. The equivariant basis $Q_1 = Qe_1$ and $Q_2 = Qe_2$ are visualized. For \eqref{eq:growth}, the parameter space is reduced to three dimensions. Each equivariant basis component only affects one equation term, visualized in three distinct colors. We can observe that the equivariant models have much fewer free parameters than non-equivariant ones.

\cref{tab:dosc} displays the success probability of different methods. With the reduced weight space, our equivariant model almost always recovers the correct equation forms. Full results, including the parameter estimation error and the prediction error with discovered equations, as well as the experiment on a higher-dimensional system, are deferred to \cref{sec:supp-esindy}. Overall, these results fully demonstrate the advantage of applying symmetry to compress the parameter space.

\subsection{Equivariant SINDy in Latent Space} \label{sec:esindyae_exp}

\begin{wrapfigure}{r}{6.8cm}
\vspace{-3mm}
    \centering
    \begin{subfigure}{.15\textwidth}
        \includegraphics[width=\textwidth, viewport=0 -50 280 320, clip]{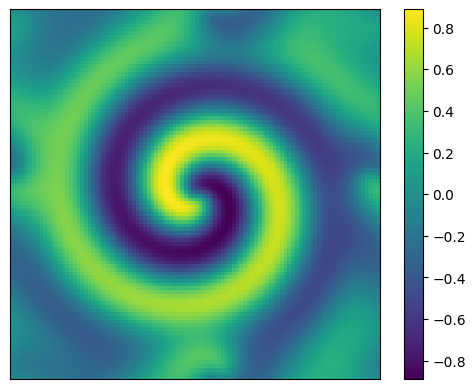}
    \end{subfigure}
    \begin{subfigure}{.25\textwidth}
        \includegraphics[width=\textwidth]{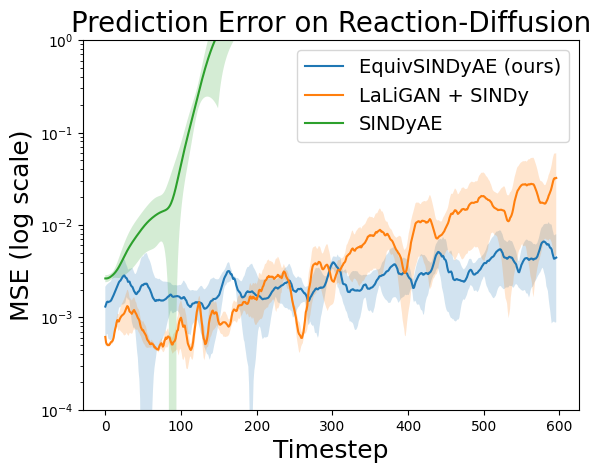}
    \end{subfigure}
    \caption{Reaction-diffusion system and the prediction error in the high-dimensional input space using equations from different methods. The means and standard deviations (shaded area) of errors over 3 random runs are plotted.}
    \label{fig:eae-ltp}
    \vspace{-1mm}
\end{wrapfigure}

We also show an example of discovering equations in the latent space with an equivariant model. We consider a $\lambda-\omega$ reaction-diffusion system visualized in \cref{fig:eae-ltp}. The system is sampled at a $100 \times 100$ spatial grid, yielding observations in $\R^{10000}$. See \cref{sec:data-gen} for more details.

We apply LaLiGAN \citep{laligan} to discover a latent space $\R^2$ where the dynamics $\bm z_t \mapsto \bm z_{t+1}$ is equivariant to a linear group action. Meanwhile, we use equivariant SINDy with the learned symmetry to discover the equation for the latent dynamics. We compare our approach with SINDy Autoencoder \citep{champion19} without any symmetry and LaLiGAN + SINDy, i.e. learning the equations in the LaLiGAN latent space but without equivariance constraint.

\cref{fig:eae-ltp} shows the prediction error on this system using the latent equations. We find that the discovery result of SINDy Autoencoder is unstable, and the simulation can quickly diverge. Meanwhile, both approaches with symmetry discover equations that accurately simulate the dynamics in the input space. Interestingly, even with similar latent spaces discovered by LaLiGAN, our approach that enforces the equivariance constraint can further reduce prediction error compared to non-constrained regression.

\subsection{Symmetry Regularization}\label{sec:symmreg_exp}

\begin{table*}[h]
\scriptsize
\caption{Equation discovery statistics on the Lotka-Volterra system \eqref{eq:lv} at noise level $\sigma_R=99\%$ and the glycolytic oscillator \eqref{eq:selkov} at noise level $\sigma_R = 20\%$. The RMSE is scaled by $\times 10^{-1}$ for \eqref{eq:lv} and $\times 10^{-2}$ for \eqref{eq:selkov}. The success probability is computed from 50 runs for sparse regression-based algorithms and 10 runs for genetic programming-based algorithms. The success probabilities of recovering individual equations (Eq. 1 \& Eq. 2) and simultaneously recovering both equations (All) are reported. The RMSE (all) refers to the parameter estimation error over all runs. The RMSE (successful) refers to the parameter estimation error over successful runs, which is missing for algorithms with zero success probability.}
    \label{tab:symmreg-full}
    \centering
    \begin{tabular}{c|c|ccc|ccc|cc}
    \hline
        \multirow{2}{4em}{\centering System} & \multirow{2}{4em}{\centering Method} & \multicolumn{3}{|c|}{Success prob.} & \multicolumn{3}{|c|}{RMSE (successful)} & \multicolumn{2}{c}{RMSE (all)}\\
        & & Eq. 1 & Eq. 2 & All & Eq. 1 & Eq. 2 & All & Eq. 1 & Eq. 2\\
        \hline
        \multirow{6}{4em}{\centering L-V \eqref{eq:lv}} & SINDy & 0.40 & \textbf{0.64} & 0.24 & 1.01 (0.26) & 0.56 (0.21) & 0.79 (0.16) & 4.01 (2.50) & \textbf{3.24} (3.67) \\
        & WSINDy & 0.18 & 0.22 & 0.06 & \textbf{0.59} (0.38) & \textbf{0.32} (0.23) & \textbf{0.26} (0.16) & 16.13 (13.28) & 18.66 (18.81) \\
        & \texttt{Equiv}SINDy\texttt{-r} & \textbf{0.54} & 0.58 & \textbf{0.36} & 1.00 (0.21) & 0.45 (0.20) & 0.79 (0.15) & \textbf{3.16} (2.46) & 3.83 (4.01) \\
        % & \texttt{Equiv}SINDy\texttt{-f} & {0.54} & 0.56 & 0.34 & 0.99 (0.21) & 0.45 (0.20) & 0.79 (0.15) & 3.29 (2.65) & 4.00 (4.05) \\
        % & \texttt{Equiv}SINDy\texttt{-r} & \textbf{0.58} & 0.58 & \textbf{0.38} & 0.99 (0.20) & 0.44 (0.20) & 0.79 (0.14) & \textbf{2.90} (2.28) & 3.80 (4.00) \\
        % \cline{2-10}
        \hhline{~|=|===|===|==}
        & GP & \textbf{1.0} & 0.0 & 0.0 & \textbf{2.44} (0.89) & N/A & N/A & \textbf{2.44} (0.89) & 13.20 (3.20)\\
        & D-CODE & 0.0 & 0.0 & 0.0 & N/A & N/A & N/A & 10.38 (0.11) & 9.08 (1.88) \\
        & \texttt{Equiv}GP\texttt{-r} & \textbf{1.0} & \textbf{0.8} & \textbf{0.8} & \textbf{2.43} (1.39) & \textbf{0.51} (0.98) & \textbf{1.58} (1.62) & \textbf{2.43} (1.39) & \textbf{1.76} (0.31) \\
        \hhline{=|=|===|===|==}
        \multirow{6}{4em}{\centering Glycolytic Oscillator \eqref{eq:selkov}} & SINDy & 0.30 & 0.56 & 0.14 & 0.87 (0.15) & 0.32 (0.13) & 0.67 (0.10) & 15.86 (17.52) & 12.71 (18.32) \\
        & WSINDy & 0.06 & 0.14 & 0.04 & \textbf{0.11} (0.10) & 0.59 (0.23) & \textbf{0.34} (0.04) & 2.3e3 (2.9e3) & 2.1e3 (2.7e3)\\
        & \texttt{Equiv}SINDy\texttt{-r} & \textbf{0.40} & \textbf{0.70} & \textbf{0.28} & 0.92 (0.22) & \textbf{0.30} (0.13) & 0.71 (0.16) & \textbf{9.97} (8.07) & \textbf{7.29} (12.72) \\
        % & \texttt{Equiv}SINDy\texttt{-f} & \textbf{0.48} & 0.68 & \textbf{0.28} & 0.87 (0.22) & \textbf{0.26} (0.12) & 0.63 (0.13) & \textbf{8.38} (8.55) & 7.97 (14.60) \\
        % & \texttt{Equiv}SINDy\texttt{-r} & 0.48 & 0.54 & \textbf{0.28} & 0.99 (0.19) & 0.72 (0.15) & 0.88 (0.08) & 8.79 (8.94) & 10.85 (17.50)\\
        % \cline{2-10}
        \hhline{~|=|===|===|==}
        & GP & 0.0 & 0.0 & 0.0 & N/A & N/A & N/A & \textbf{7.57} (5.46) & 14.88 (23.50) \\
        & D-CODE & 0.0 & 0.4 & 0.0 & N/A & 0.27 (0.11) & N/A & 14.37 (15.66) & 8.90 (16.84) \\
        & \texttt{EquivGP}\texttt{-r} & \textbf{0.1} & \textbf{0.6} & \textbf{0.1} & \textbf{0.67} (N/A) & \textbf{0.22} (0.38) & \textbf{0.40} (N/A) & 8.59 (12.10) & \textbf{3.43} (6.69) \\
        \hline
    \end{tabular}
\end{table*}

We demonstrate the use of symmetry regularization in the following ODE systems. These systems do not possess any linear symmetries for solving the constraint explicitly.

\textbf{Lotka-Volterra System}
describes the interaction between a predator and a prey population. We consider its canonical form as in \citet{laligan}, where the state is given by the logarithm population densities of the prey and the predator.

\begin{equation}
    \label{eq:lv}
    \left\{
    \begin{aligned}
    \dot{x}_1 &= \frac{2}{3} - \frac{4}{3}e^{x_2} \\
    \dot{x}_2 &= -1 + e^{x_1}
    \end{aligned}
    \right.
\end{equation}

\textbf{Glycolytic Oscillator} \citep{selkov1968}
is a biochemical system of two ODEs with complex interactions in cubic terms. We use the same constants in the equations as in \citet{qian2022dcode}.

\begin{equation}
    \label{eq:selkov}
    \left\{
    \begin{aligned}
    \dot{x}_1 &= 0.75 - 0.1 x_1 - x_1 x_2^2 \\
    \dot{x}_2 &= 0.1 x_1 - x_2 + x_1 x_2^2
    \end{aligned}
    \right.
\end{equation}

Since we assume no knowledge of the symmetries here, we first run LaLiGAN \citep{laligan} to discover symmetry from data for each system. We then extract the group action and the induced infinitesimal action from LaLiGAN and use them to compute symmetry regularization. \cref{tab:symmreg-full} shows the results of different discovery algorithms with and without symmetry. Comparison is made within each class of methods, i.e. sparse regression and genetic programming. For genetic programming, we use an alternate form of symmetry regularization \eqref{eq:fgie}, which will be discussed in \cref{sec:supp-symmreg}.

Informed by symmetries through regularization terms, equation discovery algorithms have a much higher success probability of discovering the correct equation forms. Also, models with symmetry regularization generally have the lowest parameter estimation errors averaging over all random experiments.
% For sparse regression, \cref{tab:lv} displays the result with regularization term \eqref{eq:igfe}. In addition, we observe that other formulations of symmetry regularization increase the probability of finding the correct equation form similarly. Detailed comparison is available in \yjk{add reference}. For genetic programming, we only apply $\mc L_\text{FGIE}$ \eqref{eq:fgie} because it is easier to implement within \texttt{PySR} \citep{cranmer2023interpretable}, an existing genetic programming framework. Informed by symmetries, genetic programming can successfully discover equations that it cannot discover otherwise.

In \cref{sec:supp-symmreg}, we provide additional results to show that the equations discovered by our method also achieve lower prediction error. We also compare the different regularization options based on \cref{th:equiv-constraints}.

% \begin{table*}[t]
% \caption{Discovered equations for synthetic systems} 
% \label{tab_syn}
% \begin{center}
% \begin{small}
% \begin{sc}
% \begin{tabular}{lll}
% \toprule
% & SO(2) System
% & Scaling System\\
% \midrule
% ESINDy 
% & $\dot z_1=-0.100z_1-1.000z_2$ 
% & $\dot z_1=1.000z_2^2$\\
% & $\dot z_2=1.000z_1-0.100z_2$
% & $\dot z_2=1.000z_2$\\
% \midrule
% SINDy
% & $\dot z_1=-0.190-0.099z_1-0.999z_2$
% & $\dot z_1=1.669z_1+0.165z_2^2$\\
% & $\dot z_2=0.156+0.999z_1-0.100z_2$
% & $\dot z_2=1.000z_2$\\
% \bottomrule
% \end{tabular}
% \end{sc}
% \end{small}
% \end{center}
% \vskip -0.1in
% \end{table*}

% \begin{figure*}[t]
% \begin{center}
% \subfigure[SO(2) Symmetry System]{
% \includegraphics[width=0.25\linewidth]{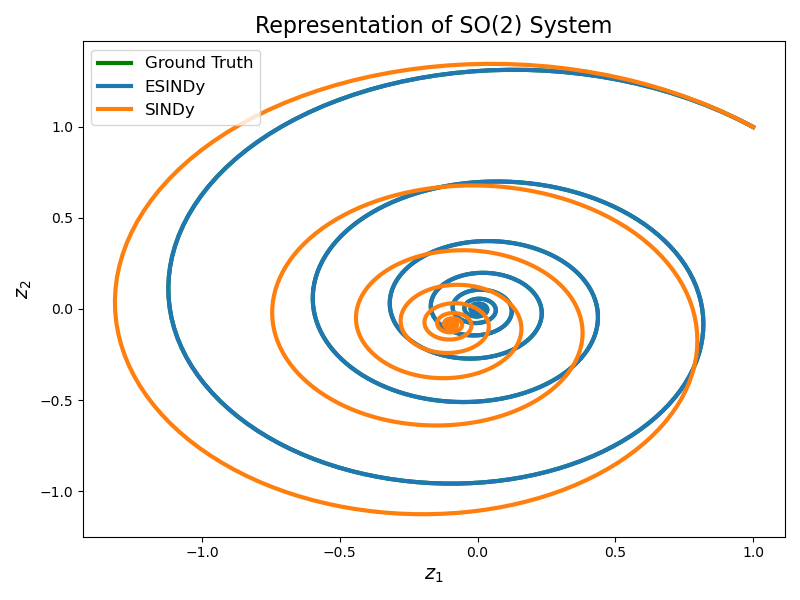}
% \label{fig_so2}
% }
% \subfigure[Scaling Symmetry System]{
% \includegraphics[width=0.25\linewidth]{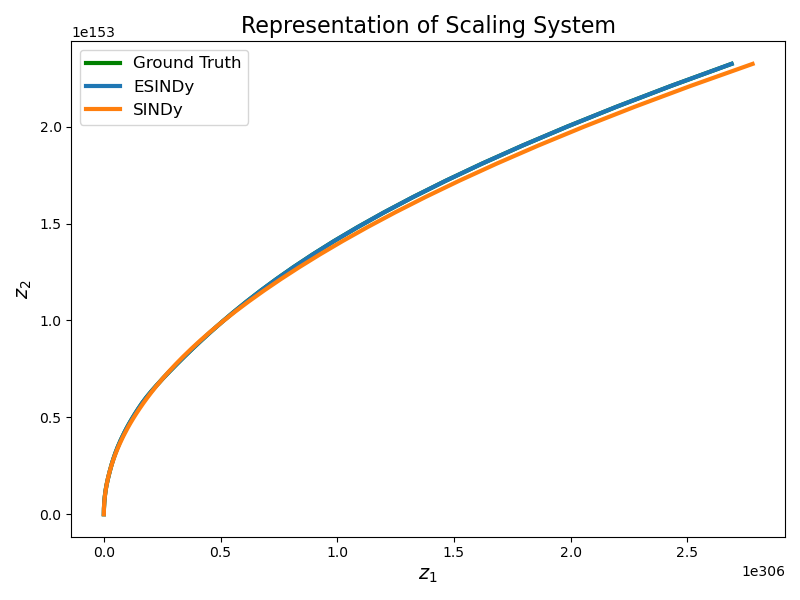}
% \label{fig_scaling}
% }
% \caption{Long term evolution of synthetic systems}
% \label{fig_syn}
% \end{center}
% \end{figure*}

% \begin{figure}[ht]
% \centering
% \includegraphics[width=0.8\linewidth]{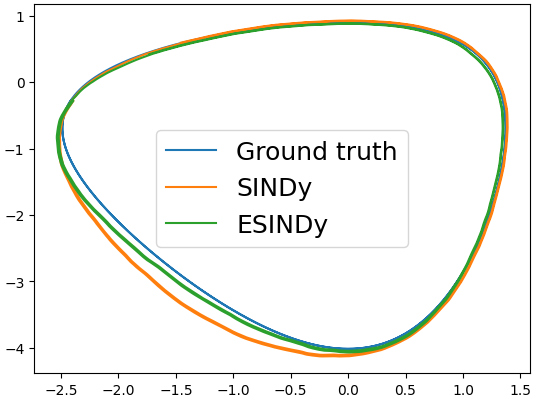}
% \caption{Long term evolution of Lotka-Volterra System}
% \label{fig_lv}
% \end{figure}

\section{Conclusion}
\label{sec:discussion}

We propose to incorporate symmetries into equation discovery algorithms. Depending on whether the action of symmetry is linear, we develop a pipeline (\cref{fig:workflow}) with various techniques, e.g. solving the equivariance constraint and using different forms of symmetry regularization, to perform symmetry-informed equation discovery.  Experimental results show that our method can reduce the search space of equations and recover true equations from noisy data.

Our methodology applies to time-independent symmetries for autonomous ODEs. These symmetries have relatively simple forms and can be either obtained as prior knowledge or learned by existing algorithms. In theory, our method can be generalized to other settings, such as time-dependent symmetries and non-autonomous ODEs. We provide a more general problem definition and discuss these potential generalizations in \cref{sec:lps-ode-general}. However, in these problems, it is less likely that we know the symmetries before discovering the equations. Extending our symmetry-informed methodology to these scenarios would require further breakthroughs in learning symmetries for general differential equation systems, which could be an interesting future research direction.

% In the future, we plan to apply symmetries to equation discovery in more challenging scenarios, such as unobserved variables. As symmetry transformations involve interactions between state variables, we hope that symmetry can provide information that is missing from the data. Another intriguing future direction is to consider other interplays between symmetries and differential equations, such as time-dependent symmetries for ODEs and symmetries for PDEs with multiple independent variables.

\section*{Acknowledgement}
\label{sec:ack}

This work was supported in part by the U.S. Army Research Office
under Army-ECASE award W911NF-23-1-0231, the U.S. Department Of Energy, Office of Science, IARPA HAYSTAC Program, CDC-RFA-FT-23-0069, DARPA AIE FoundSci, DARPA YFA, NSF Grants \#2205093, \#2146343, and \#2134274.

% Acknowledgements should only appear in the accepted version.
% \section*{Acknowledgements}
% This work was supported in part by Army-ECASE award W911NF-23-1-0231, the U.S. Department Of Energy, Office of Science under \#DE-SC0022255, IARPA HAYSTAC Program, CDC-RFA-FT-23-0069, NSF Grants \#2205093, \#2146343, and \#2134274.

% In the unusual situation where you want a paper to appear in the
% references without citing it in the main text, use \nocite
% \nocite{langley00}

% \section*{Impact Statements}
% This paper presents work whose goal is to advance the field of Machine Learning. There are many potential societal consequences of our work, none of which we feel must be specifically highlighted here.

\bibliography{ref}

\begin{thebibliography}{41}
\providecommand{\natexlab}[1]{#1}
\providecommand{\url}[1]{\texttt{#1}}
\expandafter\ifx\csname urlstyle\endcsname\relax
  \providecommand{\doi}[1]{doi: #1}\else
  \providecommand{\doi}{doi: \begingroup \urlstyle{rm}\Url}\fi

\bibitem[Abou-Ismail(2020)]{abou2020compartmental}
Abou-Ismail, A.
\newblock Compartmental models of the covid-19 pandemic for physicians and physician-scientists.
\newblock \emph{SN comprehensive clinical medicine}, 2\penalty0 (7):\penalty0 852--858, 2020.

\bibitem[Agapitos et~al.(2012)Agapitos, Brabazon, and O’Neill]{agapitos2012controlling}
Agapitos, A., Brabazon, A., and O’Neill, M.
\newblock Controlling overfitting in symbolic regression based on a bias/variance error decomposition.
\newblock In \emph{International Conference on Parallel Problem Solving from Nature}, pp.\  438--447. Springer, 2012.

\bibitem[Akhound-Sadegh et~al.(2023)Akhound-Sadegh, Perreault-Levasseur, Brandstetter, Welling, and Ravanbakhsh]{akhound2023lie}
Akhound-Sadegh, T., Perreault-Levasseur, L., Brandstetter, J., Welling, M., and Ravanbakhsh, S.
\newblock Lie point symmetry and physics informed networks.
\newblock \emph{arXiv preprint arXiv:2311.04293}, 2023.

\bibitem[Baddoo et~al.(2021)Baddoo, Herrmann, McKeon, Kutz, and Brunton]{baddoo2021physicsinformed}
Baddoo, P.~J., Herrmann, B., McKeon, B.~J., Kutz, J.~N., and Brunton, S.~L.
\newblock Physics-informed dynamic mode decomposition (pidmd), 2021.

\bibitem[Bakarji et~al.(2022)Bakarji, Callaham, Brunton, and Kutz]{bakarji22}
Bakarji, J., Callaham, J., Brunton, S.~L., and Kutz, J.~N.
\newblock Dimensionally consistent learning with buckingham pi.
\newblock \emph{Nature Computational Science}, 2:\penalty0 834--844, 12 2022.
\newblock ISSN 2662-8457.
\newblock \doi{10.1038/s43588-022-00355-5}.
\newblock URL \url{https://doi.org/10.1038/s43588-022-00355-5}.

\bibitem[Benton et~al.(2020)Benton, Finzi, Izmailov, and Wilson]{augerino}
Benton, G., Finzi, M., Izmailov, P., and Wilson, A.~G.
\newblock Learning invariances in neural networks from training data.
\newblock \emph{Advances in neural information processing systems}, 33:\penalty0 17605--17616, 2020.

\bibitem[Brandstetter et~al.(2022)Brandstetter, Welling, and Worrall]{brandstetter2022lie}
Brandstetter, J., Welling, M., and Worrall, D.~E.
\newblock Lie point symmetry data augmentation for neural pde solvers.
\newblock In \emph{International Conference on Machine Learning}, pp.\  2241--2256. PMLR, 2022.

\bibitem[Bronstein et~al.(2021)Bronstein, Bruna, Cohen, and Veli{\v{c}}kovi{\'c}]{bronstein2021geometric}
Bronstein, M.~M., Bruna, J., Cohen, T., and Veli{\v{c}}kovi{\'c}, P.
\newblock Geometric deep learning: Grids, groups, graphs, geodesics, and gauges.
\newblock \emph{arXiv preprint arXiv:2104.13478}, 2021.

\bibitem[Brunton et~al.(2016)Brunton, Proctor, and Kutz]{brunton16}
Brunton, S.~L., Proctor, J.~L., and Kutz, J.~N.
\newblock Discovering governing equations from data by sparse identification of nonlinear dynamical systems.
\newblock \emph{Proceedings of the National Academy of Sciences}, 113\penalty0 (15):\penalty0 3932--3937, 2016.
\newblock \doi{10.1073/pnas.1517384113}.
\newblock URL \url{https://www.pnas.org/doi/abs/10.1073/pnas.1517384113}.

\bibitem[Champion et~al.(2019)Champion, Lusch, Kutz, and Brunton]{champion19}
Champion, K., Lusch, B., Kutz, J.~N., and Brunton, S.~L.
\newblock Data-driven discovery of coordinates and governing equations.
\newblock \emph{Proceedings of the National Academy of Sciences}, 116\penalty0 (45):\penalty0 22445--22451, 2019.
\newblock \doi{10.1073/pnas.1906995116}.
\newblock URL \url{https://www.pnas.org/doi/abs/10.1073/pnas.1906995116}.

\bibitem[Cohen \& Welling(2016)Cohen and Welling]{gconv}
Cohen, T. and Welling, M.
\newblock Group equivariant convolutional networks.
\newblock In \emph{International conference on machine learning}, pp.\  2990--2999. PMLR, 2016.

\bibitem[Cranmer(2023)]{cranmer2023interpretable}
Cranmer, M.
\newblock Interpretable machine learning for science with pysr and symbolicregression.jl, 2023.

\bibitem[Cranmer et~al.(2020)Cranmer, Sanchez~Gonzalez, Battaglia, Xu, Cranmer, Spergel, and Ho]{cranmer20}
Cranmer, M., Sanchez~Gonzalez, A., Battaglia, P., Xu, R., Cranmer, K., Spergel, D., and Ho, S.
\newblock Discovering symbolic models from deep learning with inductive biases.
\newblock In Larochelle, H., Ranzato, M., Hadsell, R., Balcan, M., and Lin, H. (eds.), \emph{Advances in Neural Information Processing Systems}, volume~33, pp.\  17429--17442. Curran Associates, Inc., 2020.
\newblock URL \url{https://proceedings.neurips.cc/paper_files/paper/2020/file/c9f2f917078bd2db12f23c3b413d9cba-Paper.pdf}.

\bibitem[Cranmer et~al.(2019)Cranmer, Xu, Battaglia, and Ho]{cranmer19}
Cranmer, M.~D., Xu, R., Battaglia, P., and Ho, S.
\newblock Learning symbolic physics with graph networks, 2019.

\bibitem[Dehmamy et~al.(2021)Dehmamy, Walters, Liu, Wang, and Yu]{dehmamy21}
Dehmamy, N., Walters, R., Liu, Y., Wang, D., and Yu, R.
\newblock Automatic symmetry discovery with lie algebra convolutional network.
\newblock \emph{Advances in Neural Information Processing Systems}, 34:\penalty0 2503--2515, 2021.

\bibitem[Finzi et~al.(2020)Finzi, Stanton, Izmailov, and Wilson]{finzi2020generalizing}
Finzi, M., Stanton, S., Izmailov, P., and Wilson, A.~G.
\newblock Generalizing convolutional neural networks for equivariance to lie groups on arbitrary continuous data, 2020.

\bibitem[Finzi et~al.(2021)Finzi, Welling, and Wilson]{emlp}
Finzi, M., Welling, M., and Wilson, A.~G.
\newblock A practical method for constructing equivariant multilayer perceptrons for arbitrary matrix groups.
\newblock In \emph{International Conference on Machine Learning}, pp.\  3318--3328. PMLR, 2021.

\bibitem[Gaucel et~al.(2014)Gaucel, Keijzer, Lutton, and Tonda]{gaucel14}
Gaucel, S., Keijzer, M., Lutton, E., and Tonda, A.
\newblock Learning dynamical systems using standard symbolic regression.
\newblock In Nicolau, M., Krawiec, K., Heywood, M.~I., Castelli, M., Garc{\'i}a-S{\'a}nchez, P., Merelo, J.~J., Rivas~Santos, V.~M., and Sim, K. (eds.), \emph{Genetic Programming}, pp.\  25--36, Berlin, Heidelberg, 2014. Springer Berlin Heidelberg.

\bibitem[Guan et~al.(2021)Guan, Brunton, and Novosselov]{guan21}
Guan, Y., Brunton, S.~L., and Novosselov, I.
\newblock Sparse nonlinear models of chaotic electroconvection.
\newblock \emph{R. Soc. open sci.}, 8:\penalty0 202367, 2021.
\newblock \doi{10.1098/rsos.202367}.
\newblock URL \url{http://doi.org/10.1098/rsos.202367}.

\bibitem[Huh et~al.(2020)Huh, Yang, Hwang, and Shin]{huh2020time}
Huh, I., Yang, E., Hwang, S.~J., and Shin, J.
\newblock Time-reversal symmetric ode network.
\newblock \emph{Advances in Neural Information Processing Systems}, 33:\penalty0 19016--19027, 2020.

\bibitem[Kaba et~al.(2023)Kaba, Mondal, Zhang, Bengio, and Ravanbakhsh]{kaba2023equivariance}
Kaba, S.-O., Mondal, A.~K., Zhang, Y., Bengio, Y., and Ravanbakhsh, S.
\newblock Equivariance with learned canonicalization functions.
\newblock In \emph{International Conference on Machine Learning}, pp.\  15546--15566. PMLR, 2023.

\bibitem[Lee et~al.(2022)Lee, Trask, and Stinis]{lee22}
Lee, K., Trask, N., and Stinis, P.
\newblock Structure-preserving sparse identification of nonlinear dynamics for data-driven modeling.
\newblock In Dong, B., Li, Q., Wang, L., and Xu, Z.-Q.~J. (eds.), \emph{Proceedings of Mathematical and Scientific Machine Learning}, volume 190 of \emph{Proceedings of Machine Learning Research}, pp.\  65--80. PMLR, 15--17 Aug 2022.
\newblock URL \url{https://proceedings.mlr.press/v190/lee22a.html}.

\bibitem[Liu \& Tegmark(2022)Liu and Tegmark]{liu2022machine}
Liu, Z. and Tegmark, M.
\newblock Machine learning hidden symmetries.
\newblock \emph{Physical Review Letters}, 128\penalty0 (18):\penalty0 180201, 2022.

\bibitem[Loiseau \& Brunton(2018)Loiseau and Brunton]{loiseau18}
Loiseau, J.-C. and Brunton, S.~L.
\newblock Constrained sparse galerkin regression.
\newblock \emph{Journal of Fluid Mechanics}, 838:\penalty0 42–67, 2018.
\newblock \doi{10.1017/jfm.2017.823}.

\bibitem[Messenger \& Bortz(2021)Messenger and Bortz]{messenger2021weaksindy}
Messenger, D.~A. and Bortz, D.~M.
\newblock Weak sindy: Galerkin-based data-driven model selection.
\newblock \emph{Multiscale Modeling \& Simulation}, 19\penalty0 (3):\penalty0 1474--1497, 2021.

\bibitem[Meurer et~al.(2017)Meurer, Smith, Paprocki, {\v{C}}ert{\'\i}k, Kirpichev, Rocklin, Kumar, Ivanov, Moore, Singh, et~al.]{meurer2017sympy}
Meurer, A., Smith, C.~P., Paprocki, M., {\v{C}}ert{\'\i}k, O., Kirpichev, S.~B., Rocklin, M., Kumar, A., Ivanov, S., Moore, J.~K., Singh, S., et~al.
\newblock Sympy: symbolic computing in python.
\newblock \emph{PeerJ Computer Science}, 3:\penalty0 e103, 2017.

\bibitem[Mialon et~al.(2023)Mialon, Garrido, Lawrence, Rehman, LeCun, and Kiani]{mialon2023self}
Mialon, G., Garrido, Q., Lawrence, H., Rehman, D., LeCun, Y., and Kiani, B.
\newblock Self-supervised learning with lie symmetries for partial differential equations.
\newblock \emph{Advances in Neural Information Processing Systems}, 36:\penalty0 28973--29004, 2023.

\bibitem[Nocedal(1980)]{nocedal1980updatinglbfgs}
Nocedal, J.
\newblock Updating quasi-newton matrices with limited storage.
\newblock \emph{Mathematics of computation}, 35\penalty0 (151):\penalty0 773--782, 1980.

\bibitem[Olver(1993)]{olver1993applications}
Olver, P.~J.
\newblock \emph{Applications of Lie groups to differential equations}, volume 107.
\newblock Springer Science \& Business Media, 1993.

\bibitem[Otto et~al.(2023)Otto, Zolman, Kutz, and Brunton]{otto2023unified}
Otto, S.~E., Zolman, N., Kutz, J.~N., and Brunton, S.~L.
\newblock A unified framework to enforce, discover, and promote symmetry in machine learning, 2023.

\bibitem[Qian et~al.(2022)Qian, Kacprzyk, and van~der Schaar]{qian2022dcode}
Qian, Z., Kacprzyk, K., and van~der Schaar, M.
\newblock D-code: Discovering closed-form odes from observed trajectories.
\newblock In \emph{International Conference on Learning Representations}, 2022.

\bibitem[Ridderbusch et~al.(2021)Ridderbusch, Offen, Ober-Blöbaum, and Goulart]{ridderbusch21}
Ridderbusch, S., Offen, C., Ober-Blöbaum, S., and Goulart, P.
\newblock Learning ode models with qualitative structure using gaussian processes.
\newblock In \emph{2021 60th IEEE Conference on Decision and Control (CDC)}, pp.\  2896--2896, 2021.
\newblock \doi{10.1109/CDC45484.2021.9683426}.

\bibitem[Sel'Kov(1968)]{selkov1968}
Sel'Kov, E.
\newblock Self-oscillations in glycolysis 1. a simple kinetic model.
\newblock \emph{European Journal of Biochemistry}, 4\penalty0 (1):\penalty0 79--86, 1968.

\bibitem[Sober(1981)]{sober1981principle}
Sober, E.
\newblock The principle of parsimony.
\newblock \emph{The British Journal for the Philosophy of Science}, 32\penalty0 (2):\penalty0 145--156, 1981.

\bibitem[Udrescu \& Tegmark(2020)Udrescu and Tegmark]{udrescu2020ai}
Udrescu, S.-M. and Tegmark, M.
\newblock Ai feynman: A physics-inspired method for symbolic regression.
\newblock \emph{Science Advances}, 6\penalty0 (16):\penalty0 eaay2631, 2020.

\bibitem[van~der Ouderaa et~al.(2024)van~der Ouderaa, Immer, and van~der Wilk]{van2024learning}
van~der Ouderaa, T., Immer, A., and van~der Wilk, M.
\newblock Learning layer-wise equivariances automatically using gradients.
\newblock \emph{Advances in Neural Information Processing Systems}, 36, 2024.

\bibitem[Wang et~al.(2021)Wang, Walters, and Yu]{wang2021incorporating}
Wang, R., Walters, R., and Yu, R.
\newblock Incorporating symmetry into deep dynamics models for improved generalization.
\newblock In \emph{International Conference on Learning Representations}, 2021.

\bibitem[Weiler \& Cesa(2019)Weiler and Cesa]{e2cnn}
Weiler, M. and Cesa, G.
\newblock General e (2)-equivariant steerable cnns.
\newblock \emph{Advances in Neural Information Processing Systems}, 32, 2019.

\bibitem[Xie et~al.(2022)Xie, Samaei, Guo, Liu, and Gan]{xie22}
Xie, X., Samaei, A., Guo, J., Liu, W.~K., and Gan, Z.
\newblock Data-driven discovery of dimensionless numbers and governing laws from scarce measurements.
\newblock \emph{Nature Communications}, 13\penalty0 (1):\penalty0 7562, 2022.
\newblock \doi{10.1038/s41467-022-35084-w}.
\newblock URL \url{https://doi.org/10.1038/s41467-022-35084-w}.

\bibitem[Yang et~al.(2023{\natexlab{a}})Yang, Dehmamy, Walters, and Yu]{laligan}
Yang, J., Dehmamy, N., Walters, R., and Yu, R.
\newblock Latent space symmetry discovery.
\newblock \emph{arXiv preprint arXiv:2310.00105}, 2023{\natexlab{a}}.

\bibitem[Yang et~al.(2023{\natexlab{b}})Yang, Walters, Dehmamy, and Yu]{yang23}
Yang, J., Walters, R., Dehmamy, N., and Yu, R.
\newblock Generative adversarial symmetry discovery.
\newblock In \emph{International Conference on Machine Learning}. PMLR, 2023{\natexlab{b}}.

\end{thebibliography}
\bibliographystyle{icml2024}

%%%%%%%%%%%%%%%%%%%%%%%%%%%%%%%%%%%%%%%%%%%%%%%%%%%%%%%%%%%%%%%%%%%%%%%%%%%%%%%
%%%%%%%%%%%%%%%%%%%%%%%%%%%%%%%%%%%%%%%%%%%%%%%%%%%%%%%%%%%%%%%%%%%%%%%%%%%%%%%
% APPENDIX
%%%%%%%%%%%%%%%%%%%%%%%%%%%%%%%%%%%%%%%%%%%%%%%%%%%%%%%%%%%%%%%%%%%%%%%%%%%%%%%
%%%%%%%%%%%%%%%%%%%%%%%%%%%%%%%%%%%%%%%%%%%%%%%%%%%%%%%%%%%%%%%%%%%%%%%%%%%%%%%
\newpage
\appendix
\onecolumn
\section{Preliminary on Lie Point Symmetry of Differential Equations}\label{sec:prelim}
We use the theory of Lie groups to describe the symmetry transformations of ODE solutions.
This section provides a brief overview of some fundamental concepts of Lie group and its application to differential equations.

\subsection{Lie Groups}
\paragraph{Lie groups.}
A Lie group is a smooth manifold with a compatible group structure. We use Lie groups to describe continuous transformations. For example, all $n \times n$ invertible matrices with real entries form the general linear group $\mathrm{GL}(n;\mathbb R)$ with matrix multiplication as the group operation.
%Lie theory analyzes these continuous groups through their infinitesimal generators in the Lie algebra.

The Lie algebra $\mf g \coloneqq T_{e}G$ is defined as the tangent space at the identity element $e$  of the group $G$. For example, The Lie algebra of general linear group $\mathrm{GL}(n, \mathbb R)$ consists of all real-valued matrices of size $n \times n$.
We can use a vector space basis $\{v_i \in \mathfrak g\}$ to describe any Lie algebra element as $v = \sum\epsilon_iv_i$, where $\epsilon_i \in \mathbb R$. Lie algebra can be thought of as the space of infinitesimal transformations of the group. We can map the Lie algebra elements to group elements (in the identity component) via the exponential map,
$\exp \colon \mf g \rightarrow G$. For matrix Lie groups, matrix exponential is such a map.

\paragraph{Group actions.}
A group action defines how abstract group elements transform other objects in a space $V$. The group action is a function $\Gamma \colon G \times V \rightarrow V$ which maps the identity $e$ to identity transformation, i.e. $\Gamma(e, v) = v$, and is compatible with group element composition, i.e. $\Gamma(g_1, \Gamma(g_2, v)) = \Gamma(g_1g_2, v)$. We can also think of the group action as mapping a group element to a function $V \rightarrow V$ and write $\Gamma_g \coloneqq \Gamma(g, \cdot)$. The group action $\Gamma$ induces an infinitesimal action $\gamma$ of its Lie algebra given by $\gamma_v(x) = \frac{d}{d\epsilon}(\Gamma_{\exp(\epsilon v)}(x))|_{\epsilon=0}$. For a smooth manifold $V$, the infinitesimal action can be interpreted as a vector field on $V$ that generates the flow coinciding with the group action of $\exp (\epsilon v)$. Given the coordinate $(x_1, ..., x_d)$, the vector field can be written as $\gamma_v = \sum_i v^i\partial_i$, where $\partial_i \equiv \frac{\partial}{\partial x_i}$ and $v^i: V \to \R$ are the components of $\gamma_v$.

A group action is called a representation if $V$ is a vector space and $\Gamma_g$ is a linear function on $V$. When $V=\mathbb R^n$, the representation $\Gamma_g \in \mathrm{GL}(n;\mathbb R)$ is an invertible $n\times n$ matrix that transforms $v \in V$ by matrix multiplication. Correspondingly, the induced Lie algebra representation $\gamma_v \in \mf{gl}(n, \mathbb R)$ is also an $n \times n$ matrix. In the following discussion, we will refer to the symmetries with linear actions as linear symmetries and those with nonlinear actions as nonlinear symmetries.

\paragraph{Notations.}
In this paper, we mostly consider the group action and the corresponding infinitesimal action on the phase space of the ODE, $X$. For simplicity, we use $g \in G$ to refer to both the group element and the group action $\Gamma_g$ and write $g \cdot \bm x \coloneqq \Gamma_g(\bm x)$. Similarly, we use $v \in \mathfrak g$ to refer to both the Lie algebra element and its infinitesimal action and write $v(\bm x) \coloneqq \gamma_v(\bm x)$.

\subsection{Lie Point Symmetry of ODEs}\label{sec:lps-ode-general}

\subsubsection{ODE}
In this work, we are mainly concerned with the autonomous ODEs in the form \eqref{eq:ode-def1}. However, to introduce the general notion of Lie point symmetry of ODEs, it is helpful to consider a more generalized setup as follows:
\begin{equation}
\label{eq:ode0}
    \frac{d\bm x}{dt} =\bm h(t, \bm x)
\end{equation}
where $t \in T \subseteq \R$ denotes time, i.e. the only independent variable in ODEs, and $\bm x \in X \subseteq \R^d$ denotes the dependent variables.
% Our goal is to discover the function $\bm h$ in a symbolic form from the observed trajectories of the system.

To formalize the concept of symmetries, we take a geometric perspective on the differential equations \eqref{eq:ode0}. The independent and the dependent variables form a total space $E = T \times X$. We define the first-order jet space of the total space as $\mathcal J = T \times X \times X_1$. Its coordinates, $(t, x_1, ..., x_n, x'_1, ..., x'_n) \in \mc J$, represent the independent and the dependent variables as well as the time derivatives of the dependent variables. This is also known as the first-order \textbf{prolongation}.

Then, we can represent the ODE \eqref{eq:ode0} through the map $\Delta: \mc J \to \R^n$ with components $\Delta_i = x'_i - h_i(t, \bm x)$. A function $\bm x(t)$ is a solution of \eqref{eq:ode0} if its prolongation satisfies
\begin{equation}
\label{eq:ode1}
    \Delta (t, \bm x, \bm x')=0,\ \forall t
\end{equation}

\subsubsection{Symmetry}
A symmetry of the system $\Delta$ transforms one of its solutions to another. More specifically, we need to define the group action on a point in the total space $E = T \times X$ and its induced action on a function $T \to X$.

We express the action of group element $g$ in the total space as
\begin{equation}
    g\cdot(t, \bm x) = (\hat t(t, \bm x), \hat {\bm x}(t, \bm x)),
\end{equation}
where $\hat t$ and $\hat {\bm x}$ are functions over $T \times X$.

To define the induced group action on the function $f: T \to X$, we first identify $f$ with its graph
\begin{equation}
    \gamma_f = \{ (t, f(t)): t \in \Omega_T \} \subset T \times X
\end{equation}
where $\Omega_T \subset T$ is the time domain. The graph of the function can be transformed by $g$ as
\begin{equation}
    g \gamma_f = \{ (\tilde t, \tilde {\bm x}) = g\cdot(t, \bm x): (t, \bm x) \in \gamma_f \}
\end{equation}

The set $g \gamma_f$ may not be the graph of another single-valued function. However, by choosing a suitable domain $\Omega_T$, we ensure that the group elements near the identity transform the original graph into a graph of some other single-valued function $\tilde {\bm x} = \tilde f(\tilde t)$ with the transformed graph $\gamma_{\tilde f} = g\cdot \gamma_f$. In other words, the transformed function $\tilde f = g\cdot f$ is defined by the transformation of the function graph.

\begin{definition}[Def 2.23, \cite{olver1993applications}]\label{def:ode-symm}
    A symmetry group of $\Delta$ \eqref{eq:ode1} is a local group of transformations $G$ acting on an open subset of the total space $T \times X$ that whenever $\bm x = f(t)$ is a solution of $\Delta$, and whenever $g\cdot  f$ is defined for $g \in G$, then $ {\bm x} = (g\cdot f)( t)$ is also a solution of the system.
\end{definition}

\subsection{The Infinitesimal Criterion}
Let $v$ be a vector field on the total space $T \times X$ with corresponding one-parameter subgroup $\exp (\epsilon v)$. The vector field describes the infinitesimal transformations to the independent and the dependent variables. It can be written as
\begin{equation}
\label{eq:vf}
    v(t, \bm x) = \xi(t, \bm x)\partial_t + \sum_i \phi^{i}(t, \bm x)\partial_i,\ \partial_i \equiv \frac{\partial}{\partial x_i}.
\end{equation}

% \yjk{We may have different assumptions about the vector field. In our original submission, all transformations are time-independent, i.e. $\xi = 0$. We also assume that nonlinear $ v$ can be decomposed into an encoder-linear-action-decoder structure if we use LaLiGAN. These restrictions can be lifted to represent more general forms of symmetries. In particular, we observe that many differential equations have time-dependent symmetries, and the vector fields can be written in symbolic form. This motivates us to (1) include the $\xi$ component and (2) use symbolic model for all the components instead of the LaLiGAN architecture. For (2), we will also need an alternative procedure to matrix exponential to integrate the vector field into a finite transformation.}

The vector field also has its induced infinitesimal action on the jet space $T \times X \times X_1$ described by the \textbf{prolonged vector field}:
\begin{equation}
    v^{(1)} = v + \sum_i \phi_i^{(1)}(t,\bm x, \bm x')\partial_{x'_i}
\end{equation}
where $\phi_i^{(1)}$ can be computed by the \textit{prolongation formula} (Theorem 2.36, \cite{olver1993applications}) as
\begin{align}
     \phi_i^{(1)} = & D_t [\phi_i - \xi x_i'] + \xi x''_i
    \\
    = & \pd{\phi_i}{t} + \sum_j x'_j\frac{\partial \phi_i}{\partial x_j} - x'_i \big(\pd{\xi}{t} + \sum_j x'_j\frac{\partial\xi}{\partial x_j}\big)
\end{align}

We denote $\phi(t, \bm x) \coloneqq [\phi_1(t, \bm x), ..., \phi_d(t, \bm x)]^T$ as the vector field components in $X$.

% Note that the prolongation can be computed for higher-order jet spaces as well. However, as we only consider first-order ODEs in this paper, we only need to compute the prolonged vector field on $T\times X\times X_1$.

As a special case, for the time-independent symmetries we considered previously, $\xi=0$ and $ \phi(t, \bm x) = \phi (\bm x)$, so we have
\begin{equation}
\label{eq:prv-ti}
     \phi_i^{(1)} = \sum_j \frac{\partial \phi_i}{\partial x_j} x'_j
\end{equation}

Another special case is time translation. When $\xi = 1$ and $ \phi = 0$, we have $ \phi_i^{(1)} = 0$.

The following infinitesimal criterion gives the symmetry condition in terms of the vector field (and its prolongation).
\begin{theorem}[The infinitesimal criterion. Thm 2.31, \cite{olver1993applications}]\label{th:lps-inf}
    Let $\Delta$ be a system of ODEs of maximal rank defined over $M \subset T \times X$, $X \subseteq \R^d$. A local group of transformations $G$ acting on $M$ is a symmetry group of the system $\Delta$ iff \footnote{Strictly speaking, the reverse only holds when $\Delta$ satisfy certain additional local solvability conditions.} for every infinitesimal genrator $v$ of $G$, $v^{(1)}[\Delta_i] = 0$, $i=1, ..., d$, whenever $\Delta = 0$.
\end{theorem}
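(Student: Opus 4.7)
The plan is to prove both directions by passing through the geometric picture of solutions as a submanifold of the jet space. Define the zero-set $S_\Delta = \{ (t, \bm x, \bm x') \in \mathcal J : \Delta(t, \bm x, \bm x') = 0 \}$. The maximal rank hypothesis on $\Delta$ guarantees that $S_\Delta$ is a regular embedded submanifold of $\mathcal J$ of codimension $d$. The key observation is that the prolongation of any solution $\bm x = f(t)$ lies entirely in $S_\Delta$, and, conversely, by the local existence theorem for ODEs, through every point of $S_\Delta$ there passes the prolongation of at least one local solution.

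For the forward direction, suppose $G$ is a symmetry group. I would first show that the prolonged action of each $g \in G$ preserves $S_\Delta$: given $p \in S_\Delta$, pick a local solution $f$ whose first prolongation passes through $p$; by the symmetry property, $g\cdot f$ is again a solution, so its prolongation also lies in $S_\Delta$, and this prolongation agrees with $g^{(1)} \cdot p$ by the defining property of the prolonged action. Writing $g_\epsilon = \exp(\epsilon v)$ and differentiating the identity $\Delta(g^{(1)}_\epsilon \cdot p) = 0$ at $\epsilon = 0$ then yields $v^{(1)}[\Delta_i](p) = 0$ at every $p \in S_\Delta$, which is exactly the infinitesimal criterion.

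For the reverse direction, suppose $v^{(1)}[\Delta_i] = 0$ on $S_\Delta$ for every Lie algebra generator $v$. Geometrically this says that $v^{(1)}$ is tangent to $S_\Delta$ everywhere. By standard ODE theory for vector fields tangent to submanifolds, the integral curves of $v^{(1)}$ remain in $S_\Delta$, so the prolonged flow $\exp(\epsilon v^{(1)})$ maps $S_\Delta$ into itself; equivalently, the prolonged action of $\exp(\epsilon v)$ preserves $S_\Delta$. Consequently, if $\bm x = f(t)$ is a solution, then every point of the prolongation of $g\cdot f$ continues to satisfy $\Delta = 0$.

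The main obstacle will be the final step of the reverse direction: converting ``the prolonged action preserves $S_\Delta$'' into ``the group maps solutions to solutions.'' The graph of $g\cdot f$ is defined as the image of the graph of $f$ under $g$, but this image need not itself be a graph. One must verify that for $g$ close enough to the identity, a transversality condition holds so that the transformed set is again the graph of a single-valued function whose prolongation coincides pointwise with $g^{(1)}$ applied to the original prolongation. This regularity requirement is precisely the local solvability hypothesis on $\Delta$ flagged in the footnote, and without it the infinitesimal criterion only guarantees preservation of the variety $\{\Delta = 0\}$ rather than of the space of bona fide solutions.
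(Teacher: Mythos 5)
The paper does not prove this statement: it is quoted verbatim as background (Theorem 2.31 of Olver's \emph{Applications of Lie Groups to Differential Equations}) and no argument is given anywhere in the text, so there is no in-paper proof to compare against. Your sketch is the standard jet-space argument from that reference, and its outline is sound: maximal rank makes $S_\Delta = \{\Delta = 0\}$ a regular submanifold of $\mathcal{J}$ whose tangent space is cut out by the $d\Delta_i$, so $v^{(1)}[\Delta_i]=0$ on $S_\Delta$ is exactly tangency of $v^{(1)}$ to $S_\Delta$, and the two directions follow by integrating the prolonged flow and by differentiating the invariance of $S_\Delta$, respectively, together with the fact that the prolongation of $g\cdot f$ equals the prolonged action applied to the prolongation of $f$.

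The one point you get backwards is the role of the footnoted local-solvability hypothesis. Local solvability is precisely the statement you invoke in your \emph{forward} direction --- that through every point of $S_\Delta$ there passes the prolongation of some local solution --- and it is needed there because the symmetry hypothesis only constrains points of $S_\Delta$ that actually lie on solution prolongations; you obtain it for free only because the systems here are in solved form $\Delta_i = x_i' - h_i(t,\bm x)$, where the ODE existence theorem supplies a solution through every point of $S_\Delta$. By contrast, the graph-transversality issue you flag in the reverse direction (the transformed graph $g\gamma_f$ failing to be a graph) is not what local solvability addresses; it is absorbed into the definition of the local action on functions, since \cref{def:ode-symm} only requires $(g\cdot f)(t)$ to be a solution \emph{whenever $g\cdot f$ is defined}, i.e.\ for $g$ sufficiently near the identity and a suitably shrunk domain. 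So the logical content of your proof is correct for the ODEs considered in this paper, but the diagnosis of which hypothesis rescues which implication should be swapped.
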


Once we have the symmetry represented by the vector field $v$, we can derive constraints on the equation form from the above theorem.

Generally, when the vector field takes the general form in \eqref{eq:vf}, \cref{th:lps-inf} is equivalent to
\begin{equation}
    J_{\phi}(t) + J_{\phi}(\bm x)\bm h(\bm x) - (\xi_t + (\nabla_{\bm x}\xi)^T\bm h(\bm x))\bm h(\bm x)
    % = \bm  v^{(1)}(t, \bm x, \bm h(\bm x))
    = J_{\bm h}(t, \bm x)\begin{bmatrix}
        \xi(t, \bm x) \\ \phi(t, \bm x)
    \end{bmatrix}
\end{equation}

For a more comprehensive understanding of the application of Lie groups in differential equations, we refer the reader to \citet{olver1993applications}.

\section{Proofs}\label{sec:proof}

\subsection{Lie Point Symmetry and Equivariance}
\label{sec:proof-symm}

\begin{proposition}
\label{prop:flow-map-equiv-restate}
    Let $G$ be a group that acts on $X$, the phase space of the ODE \eqref{eq:ode-def1}. $G$ is a symmetry group of the ODE \eqref{eq:ode-def1} in terms of \cref{def:ode-ti-symm} if and only if for any $\tau \in T$, the flow map $\bm f_\tau$ is equivariant to the $G$-action on $X$.
\end{proposition}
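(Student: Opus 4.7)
The plan is to prove both implications directly from the definitions, using only the existence and uniqueness of solutions to \eqref{eq:ode-def1} and the autonomy of the system (which is what makes $\bm f_\tau$ well-defined as a family parametrized by $\tau$ alone).

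For the forward direction, suppose $G$ is a time-independent symmetry. Fix $\bm x_0 \in X$, $g \in G$, and $\tau \in T$. Let $\bm x(t)$ be the unique solution of the initial value problem \eqref{eq:fl-def} starting at $\bm x_0$, so $\bm f_\tau(\bm x_0) = \bm x(\tau)$. By \cref{def:ode-ti-symm}, the curve $\tilde{\bm x}(t) \coloneqq g\cdot \bm x(t)$ is also a solution of \eqref{eq:ode-def1}. Its initial value is $\tilde{\bm x}(0) = g\cdot \bm x_0$, so by the uniqueness of solutions $\tilde{\bm x}(\tau) = \bm f_\tau(g\cdot \bm x_0)$. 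Equating the two expressions for $\tilde{\bm x}(\tau)$ yields $\bm f_\tau(g\cdot \bm x_0) = g\cdot \bm f_\tau(\bm x_0)$, which is the desired equivariance.

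For the reverse direction, assume $\bm f_\tau$ is $G$-equivariant for every $\tau$. Let $\bm x(t)$ be any solution of \eqref{eq:ode-def1} and set $\tilde{\bm x}(t) \coloneqq g\cdot \bm x(t)$; we must show $\tilde{\bm x}$ satisfies \eqref{eq:ode-def1}. Pick any base time $t_0$ and define $\bm y(\tau) \coloneqq \bm f_\tau(\tilde{\bm x}(t_0))$, which by construction satisfies the ODE with $\bm y(0) = \tilde{\bm x}(t_0)$. The autonomy of the ODE combined with uniqueness gives $\bm x(t_0+\tau) = \bm f_\tau(\bm x(t_0))$, and then equivariance yields
\begin{equation*}
\bm y(\tau) = \bm f_\tau(g\cdot \bm x(t_0)) = g\cdot \bm f_\tau(\bm x(t_0)) = g\cdot \bm x(t_0+\tau) = \tilde{\bm x}(t_0+\tau).
\end{equation*}
Differentiating at $\tau = 0$ gives $\dot{\tilde{\bm x}}(t_0) = \dot{\bm y}(0) = \bm h(\bm y(0)) = \bm h(\tilde{\bm x}(t_0))$. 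Since $t_0$ was arbitrary, $\tilde{\bm x}$ solves \eqref{eq:ode-def1}, so $G$ is a symmetry group in the sense of \cref{def:ode-ti-symm}.

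The argument is short, but the step that needs care is the reverse direction, where one must convert a discrete-in-$\tau$ equivariance statement into an infinitesimal one. The key observation is that $\bm f_\tau$, viewed as a local flow, satisfies $\bm x(t_0+\tau) = \bm f_\tau(\bm x(t_0))$ thanks to autonomy; without this, $\bm f_\tau$ would depend on the starting time and the chain above would break. Aside from that, the only technicality is that if $G$ acts only locally, $g\cdot \bm x(t)$ and $\bm f_\tau$ need to be simultaneously defined on an appropriate open set, but this is a standard caveat already subsumed in the paper's convention that $G$ acts on all of $X$.
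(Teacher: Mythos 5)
Your proof is correct and follows essentially the same route as the paper's: the forward direction uses that the transformed solution is itself a solution together with uniqueness to identify $\bm f_\tau(g\cdot \bm x_0)$ with $g\cdot \bm f_\tau(\bm x_0)$, and the reverse direction propagates the transformed curve with the flow map and differentiates at $\tau=0$. Your version is slightly more explicit about where uniqueness and autonomy are invoked, but the argument is the same.
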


\begin{proof}
    The proof follows from applying various definitions. If $G$ is a symmetry group of \eqref{eq:ode-def1}, then for any solution $\bm x = \bm x(t)$, $\tilde{\bm x} = g \cdot \bm x$ is still a solution to \eqref{eq:ode-def1}. By definition of the flow map $\bm f_\tau$, we have $\bm f_\tau(\tilde {\bm x}_0) = \tilde {\bm x}_\tau$, where $\tilde {\bm x}_0 \coloneqq g \cdot \bm x(t; \bm x_0)$ and $\tilde {\bm x}_\tau \coloneqq g \cdot \bm x(t+\tau)$, for any $\bm x_0 \in X$. That is, $\bm f_\tau(g \cdot \bm x_0) = g \cdot \bm x(t+\tau) = g \cdot \bm f_\tau (\bm x_0)$, i.e. $\bm f_\tau$ is $G$-equivariant.

    On the other hand, for any solution to the ODE \eqref{eq:ode-def1} $\bm x = \bm x(t)$ where $\bm x(0) = \bm x_0$, we consider the transformed function $\tilde {\bm x} = g \cdot {\bm x}$. Because $\bm f_\tau$ is $G$-equivariant, we have
    \begin{equation}
        \tilde {\bm x}(t+\tau) = g \cdot \bm x(t+\tau) = g \cdot \bm f_\tau (\bm x(t)) = \bm f_\tau (\tilde {\bm x}(t)),\ \forall \tau
    \end{equation}

    Taking the derivative w.r.t $\tau$ at $\tau=0$, we have $\frac{d}{dt}\tilde{ \bm x } (t) = \bm h (\tilde {\bm x} (t))$, which means $\tilde {\bm x}$ is also a solution to the ODE \eqref{eq:ode-def1}.
\end{proof}

\begin{theorem}
\label{th:equiv-constraints-restate}
    Let $G$ be a time-independent symmetry group of the ODE \eqref{eq:ode-def1}. Let $v \in \mathfrak g$ be an element in the Lie algebra $\mathfrak g$ of the group $G$. Consider the flow map $\bm f$ defined in \eqref{eq:fl-def} for a fixed time interval. For all $\bm x \in X$ and $g = \exp (\epsilon v) \in G$, the following equations hold:
    \begin{align*}
        \bm f(g \cdot \bm x) - g \cdot \bm f(\bm x) &= 0; \Label{eq:equiv-fg-restate} &
        J_{\bm f}(\bm x)v(\bm x) - v (\bm f(\bm x)) &= 0; \Label{eq:equiv-fv-restate} \\
        J_g(\bm x)\bm h(\bm x) - \bm h(g \cdot \bm x) &= 0; \Label{eq:equiv-hg-restate} &
        J_{\bm h}(\bm x)v(\bm x) - J_{v}(\bm x)\bm h(\bm x) &= 0. \Label{eq:equiv-hv-restate}
    \end{align*}
\end{theorem}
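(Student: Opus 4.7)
The plan is to obtain the four equalities in sequence, each derived from the previous one by a single differentiation step in either the group parameter $\epsilon$ or the flow time $\tau$. Equation \eqref{eq:equiv-fg-restate} requires no extra work: it is precisely the statement of \cref{prop:flow-map-equiv-restate} applied to the fixed flow map $\bm f = \bm f_{\Delta t}$. So the work is really to derive \eqref{eq:equiv-fv-restate}, \eqref{eq:equiv-hg-restate}, and \eqref{eq:equiv-hv-restate} from \eqref{eq:equiv-fg-restate}.

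For \eqref{eq:equiv-fv-restate}, I would hold $\bm f$ fixed, substitute $g = \exp(\epsilon v)$ into \eqref{eq:equiv-fg-restate}, and differentiate in $\epsilon$ at $\epsilon = 0$. The chain rule gives $J_{\bm f}(\bm x)\cdot \frac{d}{d\epsilon}(\exp(\epsilon v)\cdot\bm x)\bigr|_{\epsilon=0}$ on the left, which by the definition of the infinitesimal action is $J_{\bm f}(\bm x) v(\bm x)$. The right side directly yields $v(\bm f(\bm x))$. For \eqref{eq:equiv-hg-restate}, the roles are swapped: I keep $g$ fixed, take $\bm f = \bm f_\tau$ in \eqref{eq:equiv-fg-restate}, and differentiate in $\tau$ at $\tau = 0$. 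Using $\frac{d}{d\tau}\bm f_\tau(\bm y)\bigr|_{\tau=0} = \bm h(\bm y)$, the left side becomes $\bm h(g\cdot\bm x)$ and the right side becomes $J_g(\bm x)\bm h(\bm x)$ by the chain rule applied to the outer $g$-action.

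Finally, \eqref{eq:equiv-hv-restate} arises as the fully infinitesimal limit, obtained by differentiating \eqref{eq:equiv-fv-restate} in $\tau$ at $\tau = 0$ after setting $\bm f = \bm f_\tau$. The key calculation uses the Taylor expansion $\bm f_\tau(\bm x) = \bm x + \tau\bm h(\bm x) + O(\tau^2)$, from which $J_{\bm f_\tau}(\bm x) = I + \tau J_{\bm h}(\bm x) + O(\tau^2)$. Differentiating the product $J_{\bm f_\tau}(\bm x) v(\bm x)$ at $\tau = 0$ yields $J_{\bm h}(\bm x) v(\bm x)$, while the chain rule applied to $v(\bm f_\tau(\bm x))$ yields $J_v(\bm x)\bm h(\bm x)$. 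Equivalently, one could differentiate \eqref{eq:equiv-hg-restate} in $\epsilon$ at $\epsilon = 0$ with $g = \exp(\epsilon v)$ and obtain the same identity; the symmetry between the two derivations is the manifestation of $\bm h$ and $v$ commuting as vector fields.

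The main technical care needed is to keep the two independent one-parameter families—$\epsilon$ from the group and $\tau$ from the flow—cleanly separated, and to justify interchanging differentiation with the flow evaluation. Both are standard consequences of smoothness of the Lie group action and smooth dependence of ODE solutions on initial conditions and time, so these steps are routine rather than the substantive obstacle. The only subtle point worth emphasizing is the distinction between $v(\bm f(\bm x))$ (evaluation of the vector field at the point $\bm f(\bm x)$) and $J_v(\bm x)\bm h(\bm x)$ (the Jacobian of the vector field contracted with $\bm h$), which matches the footnote already given in the main text.
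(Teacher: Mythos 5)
Your proposal is correct and follows essentially the same route as the paper's proof: \eqref{eq:equiv-fg-restate} is read off from \cref{prop:flow-map-equiv-restate}, \eqref{eq:equiv-fv-restate} and \eqref{eq:equiv-hg-restate} are obtained by differentiating it in $\epsilon$ and in $\tau$ respectively at zero, and \eqref{eq:equiv-hv-restate} by differentiating \eqref{eq:equiv-fv-restate} in $\tau$ using $J_{\bm f_\tau}(\bm x) = I + \tau J_{\bm h}(\bm x) + O(\tau^2)$. The paper's computation of $\frac{d}{d\tau}J_{\bm f_\tau}\big|_{\tau=0}$ by swapping the order of the $\tau$- and $\bm x$-derivatives is exactly the Taylor-expansion argument you give.
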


\begin{proof}
    From \cref{prop:flow-map-equiv-restate}, the flow map $\bm f$ is equivariant to $G$, so \eqref{eq:equiv-fg-restate} follows directly.

    Consider a group element in the identity component that can be written as $g = \exp(\epsilon v),\,v\in\mathfrak g$. Taking the derivative with respect to $\epsilon$ on both sides of \eqref{eq:equiv-fg-restate} at $\epsilon = 0$, we have
    \begin{align*}
        & \frac{d}{d\epsilon}\big[\bm f(g \cdot \bm x)\big]\Big|_{\epsilon=0} = \frac{d}{d\epsilon}\big[g \cdot \bm f(x)\big]\Big|_{\epsilon=0} \\
        \Rightarrow\, &
        \frac{d}{d\epsilon}\big[\bm f(g \cdot \bm x)\big]\Big|_{\epsilon=0} = v(\bm f(\bm x)) \\
        \Rightarrow\, &
        J_{\bm f}(\exp(0v)(\bm x)) v(\bm x) = v (\bm f(\bm x)) \\
        \Rightarrow\, &
        J_{\bm f}(\bm x) v(\bm x) = v(\bm f(\bm x))
    \end{align*}

    \eqref{eq:equiv-hg-restate} and \eqref{eq:equiv-hv-restate} can be proved similarly. Let $\bm f_\tau (\bm x_0) = \int_{t_0}^{t_0+\tau} \bm h(\bm x)dt$ (where $\bm x(t_0)=\bm x_0$) be the flow map given the time interval $\tau$. Note that $\frac{d}{d\tau}|_{\tau=0}\bm f_\tau(\bm x) = \bm h(\bm x)$. Since \eqref{eq:equiv-fg-restate} is true for any $\tau$, we can take the derivative with respect to $\tau$ on \eqref{eq:equiv-fg-restate} at $\tau=0$ and obtain \eqref{eq:equiv-hg-restate}.

    Finally, we take the derivative with respect to $\tau$ on \eqref{eq:equiv-fv-restate} at $\tau=0$. We have
    \begin{align*}
        \big[\frac{d}{d\tau}J_{\bm f_\tau}(\bm x)\big|_{\tau=0} \big]_{ij} &= \frac{\partial}{\partial x_j} \frac{d}{d\tau}\big|_{\tau=0} f_\tau^i(\bm x) \\
        &= [J_{\bm h}(\bm x)]_{ij}
    \end{align*}
    and
    \begin{align*}
        \frac{d}{d\tau} v(\bm f_\tau(\bm x)) \big|_{\tau=0} &= J_v(\bm f_{\tau=0}(\bm x)) \bm h(\bm x) \\
        &= J_v(\bm x)\bm h(\bm x)
    \end{align*}
    which give us \eqref{eq:equiv-hv-restate}.
\end{proof}

\begin{remark}
    \eqref{eq:equiv-hv-restate} can also be obtained from the infinitesimal criterion (\cref{th:lps-inf}). One can verify that when we assume time-independent symmetries and thus the prolonged vector fields given by \eqref{eq:prv-ti}, the symmetry condition in \cref{th:lps-inf} is equivalent to
    \begin{equation}
        \sum_j \pd{ \phi_i}{x_j}x'_j = \sum_j \pd{h_i}{x_j} \phi_j,\quad i=1,...,d
    \end{equation}
    or, in its matrix form, $J_{v}(\bm x)\bm h(\bm x) = J_{\bm h}(\bm x)v(\bm x)$, where $v(\bm x) = [\phi_1(\bm x), ..., \phi_d(\bm x)]^T$.
\end{remark}

\begin{remark}
    \eqref{eq:equiv-fv-restate} is similar to some results in the existing literature, e.g. Section 4.2 in \citet{emlp} and Theorem 3 in \citet{otto2023unified}. \eqref{eq:equiv-fv-restate} is a generalization of these results which does not require the linearity of the function $\bm f$ or the infinitesimal action $v$.
\end{remark}

% \subsection{The Infinitesimal Formula for Lie Group Equivariance}

% \begin{theorem}
% \label{th:inf_constraint_restate}
% Let $f: V \rightarrow V$ be an equivariant function to a Lie group $G$ under its action $\Gamma$. Let $v \in \mathfrak{g}$ be an element in the Lie algebra $\mf g$ of the group $G$. We have
% \begin{equation}
%     J_f(\cdot)v(\cdot) - (v \circ f)(\cdot) = 0
%     \label{eq:inf_constraint_restate}
% \end{equation}
% \end{theorem}

% \begin{proof}

% That $f$ is equivariant to a Lie group $G$ under the action $\Gamma: G \times V \to V$ means
% \begin{equation}
%     f(\Gamma_g(x)) = \Gamma_g(f(x)),\,\forall g \in G, x \in V
% \end{equation}

% Consider a group element in the identity component that can be written as $g = \exp(\epsilon v),\,v\in\mathfrak g$. Taking the derivative with respect to $\epsilon$ on both sides at $\epsilon = 0$, we have
% \begin{align}
%     & \frac{d}{d\epsilon}\big[f(\Gamma_g(x))\big]\Big|_{\epsilon=0} = \frac{d}{d\epsilon}\big[\Gamma_g(f(x))\big]\Big|_{\epsilon=0} \\
%     \Rightarrow\, &
%     \frac{d}{d\epsilon}\big[f(\Gamma_g(x))\big]\Big|_{\epsilon=0} = \gamma_v(f(x)) \\
%     \Rightarrow\, &
%     J_f(\Gamma_{\exp(0v)}(x)) \gamma_v(x) = \gamma_v (f(x)) \\
%     \Rightarrow\, &
%     J_f(x)\gamma_v(x) = \gamma_v(f(x))
% \end{align}
% for any $x \in V$. Using $v$ to represent both the Lie algebra element and its action, we recover \eqref{eq:inf_constraint_restate}. 

% \end{proof}

\subsection{The Symbolic Map $M_{\Theta}$}
\label{sec:M-Theta}

In \cref{sec:esindy},  we have defined the symbolic map $M_{{\Theta}} : (\mathbb{R}^d \rightarrow \mathbb{R}^n) \rightarrow \mathbb{R}^{n \times p}$, which maps each component of a function with multivariate output to its coordinate in the function space spanned by $\Theta$. This map is introduced to account for the fact that the constraint \cref{prop:sindy-linear-symm} holds for any $\bm x$. We apply this map to search for equal functions by their equal coordinates in the function space.

As an example, let the input dimension be $n=2$
and $\Theta$ the set of all polynomials up to second order, i.e. $\Theta(x_1,x_2) = [1, x_1, x_2, x_1^2, x_1x_2, x_2^2]^T$. Define $f_1(x_1, x_2) = [x_2^2, x_2]^T$ and $f_2(x_1, x_2) = x_1^3$.

We have $M_\Theta(f_1) = \begin{bmatrix}
    0 & 0 & 0 & 0 & 0 & 1 \\
    0 & 0 & 1 & 0 & 0 & 0
\end{bmatrix}$, and $M_\Theta(f_2)$ undefined because the cubic term cannot be expressed in terms of a linear combination of functions in $\Theta$.

Thus, for a given infinitesimal action $v$, we need to ensure that an appropriate function library $\Theta$ is chosen for SINDy so that it can express the equivariant functions to the given action. For a linear action $v(\bm x) = L_v\bm x$, we prove that the polynomial functions satisfy the requirement.

% We formalize the existence of the map $M_\Theta ( J_\Theta(\cdot) L_v(\cdot))$ in equation \eqref{eq:constraint_sindy_3} when function library $\Theta(\bm{x})$ consists of all the polynomial functions up to a certain degree. For a vector field $\bm{x} \in V$, the map $M_\Theta ( J_\Theta(\cdot) L_v(\cdot))$ symbolically represents the projection of the vector field $J_{\Theta}(\bm{x}){L_v}{\bm{x}}$ onto the space spanned by the polynomial basis functions in \(\Theta(\bm{x})\).

\begin{proposition}
Let $\Theta(\bm{x})$ be the set of all polynomial functions up to degree $q$ $(q \in \mathbb{Z}^{+})$. Then, the components of \( J_{\Theta}(\bm{x}) L_v \bm{x} \in \R^p \) can be written as linear combinations of the terms in $\Theta(\bm{x})$, so that $M_\Theta(J_\Theta(\cdot) L_v(\cdot))$ is well-defined.
\end{proposition}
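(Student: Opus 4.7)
The plan is to reduce the claim to a simple degree-counting argument. Since $\Theta(\bm x)$ consists of all monomials $\bm x^{\alpha} = x_1^{\alpha_1} \cdots x_d^{\alpha_d}$ with total degree $|\alpha| \le q$, it suffices to show that for any such monomial $m(\bm x)$, the scalar function $\nabla m(\bm x)^\top L_v \bm x$ is again a polynomial of total degree at most $q$. Once this is established, we know that each component of $J_\Theta(\bm x) L_v \bm x$ lies in the $\mathbb{R}$-span of monomials of degree at most $q$, which is precisely the span of $\Theta$, so $M_\Theta$ applied componentwise is well-defined.

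First I would observe that differentiation lowers degree. For $m(\bm x) = \bm x^\alpha$ with $|\alpha| = k \le q$, the partial derivative $\partial_j m$ is either zero or a scalar multiple of the monomial $\bm x^{\alpha - e_j}$ of total degree $k-1$. Hence every entry of the gradient row $\nabla m(\bm x)^\top$ is a polynomial of degree at most $k-1$. Next, because the action is linear, the vector $L_v \bm x$ has entries that are homogeneous linear forms in $\bm x$, i.e.\ polynomials of degree exactly $1$ (or $0$ if the row of $L_v$ vanishes). Multiplying a degree-$(k-1)$ polynomial by a degree-$1$ polynomial and summing over $d$ coordinates yields a polynomial of total degree at most $k$, hence at most $q$.

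Assembling these observations over all rows of $J_\Theta$ shows that every component of $J_\Theta(\bm x) L_v \bm x$ is a polynomial of total degree at most $q$, and the space of such polynomials coincides with the $\mathbb{R}$-linear span of the entries of $\Theta(\bm x)$. Therefore each component can be written uniquely as $\bm c^\top \Theta(\bm x)$ for some $\bm c \in \mathbb{R}^p$, so $M_\Theta(J_\Theta(\cdot) L_v(\cdot))$ is well-defined and can be computed symbolically by reading off the coefficients of the monomials.

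No step here looks difficult; the only subtlety is making sure to include the constant term so that monomials of all degrees from $0$ to $q$ are available, guaranteeing closure of the polynomial ring truncated at degree $q$ under the operation $m \mapsto \nabla m^\top L_v \bm x$. I would write the argument explicitly for a single monomial and then remark that linearity in $m$ extends the conclusion to arbitrary elements of $\Theta$, which is straightforward and does not require any calculation beyond the degree bookkeeping above.
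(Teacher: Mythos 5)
Your argument is correct and is essentially the same degree-counting proof as the paper's: differentiation of a degree-$\le q$ monomial drops the degree to at most $q-1$, multiplication by the linear form $L_v\bm{x}$ raises it back to at most $q$, and the span of $\Theta$ contains all such polynomials. Your version is slightly more explicit in working monomial-by-monomial, but the underlying idea is identical.
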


\begin{proof}
Consider the function library $\Theta(\bm{x})$ that includes all polynomial terms up to degree \(p \geq 1\). The Jacobian $J_{\Theta}(\bm{x})$ captures the partial derivatives of each component of $\Theta(\bm{x})$, which are polynomials up to degree \(q-1\). The linear transformation $L_v$ maps $\bm{x}$ to $L_v\bm{x}$, and therefore, is a first-degree polynomial in terms of $\bm{x}$. Hence, the product $J_{\Theta}(\bm{x}) \cdot L_v\bm{x}$ yields a function vector whose components are sums of products of two sets of polynomials: those of degree up to $q-1$ from the Jacobian, and those of degree one from the linear transformation. The highest possible degree of any term in the product is $q$, as it is the sum of a polynomial of degree $q-1$ and a polynomial of degree one. 

Since $\Theta(\bm{x})$ includes all polynomials up to degree \(p\), it includes all the terms generated by the product $J_{\Theta}(\bm{x}) \cdot L_v\bm{x}$. Therefore, the components of \( J_{\Theta}(\bm{x}) L_v \bm{x} \in \R^p \) can be written as linear combinations of the terms in $\Theta(\bm{x})$.
\end{proof}
\section{Supplementary Experimental Results}

\subsection{Equivariant SINDy for Linear Symmetries}\label{sec:supp-esindy}

\subsubsection{Supplementary Results for \cref{sec:esindy_exp}}
In addition to \cref{tab:dosc} in the main text, \cref{tab:esindy-full} shows the parameter estimation error and the long-term prediction error based on the discovered equations. For the parameter estimation error, we report the RMSE over successful runs and all runs. For the long-term prediction error, we select three temporal checkpoints and report the average prediction error of the equations at these checkpoints. The predictions from our discovered equations are much more accurate than equations from other methods. 
While our \texttt{Equiv}SINDy\texttt{-c} achieves the best success probability and parameter estimation error over all runs, it is interesting to observe that Weak SINDy (WSINDy) has very accurate estimations of the constant parameters once it discovers the correct equation form (occasionally). We conjecture that this is because using the weak form of ODE averages out the white noise over a long time interval. Its low success probability suggests that the objective based on the weak form may introduce some difficulties in eliminating the incorrect terms. However, once it is given the correct form of the equation, parameter estimation can become much more accurate compared to other methods that use $L_2$ error at each individual data point with high measurement noise.

\begin{table}[h]
\caption{Equation discovery statistics on the damped oscillator \eqref{eq:dosc} at noise level $\sigma_R = 20\%$ and the growth system \eqref{eq:growth} at $\sigma_R = 5\%$. The success probability is computed from 100 runs for each algorithm. The success probabilities of recovering individual equations (Eq. 1 \& Eq.2) and simultaneously recovering both equations (All) are reported. The RMSE (all) refers to the parameter estimation error over all runs. The RMSE (successful) refers to the parameter estimation error over successful runs, which is missing for algorithms with zero success probability.}
    \label{tab:esindy-full}
\scriptsize
    \centering
    \begin{tabular}{c|c|ccc|ccc|cc}
    \hline
        \multirow{2}{4em}{\centering System} & \multirow{2}{4em}{\centering Method} & \multicolumn{3}{|c|}{Success prob.} & \multicolumn{3}{|c|}{RMSE (successful) ($\times 10^{-1}$)} & \multicolumn{2}{c}{RMSE (all) ($\times 10^{-1}$)}\\
        & & Eq. 1 & Eq. 2 & All & Eq. 1 & Eq. 2 & All & Eq. 1 & Eq. 2\\
        \hline
        \multirow{5}{6em}{\centering Oscillator \eqref{eq:dosc}} & GP & 0.00 & 0.70 & 0.00 & N/A & 0.23 (0.11) & N/A & 0.71 (0.00) & 0.37 (0.24)\\
        & D-CODE & 0.00 & 0.00 & 0.00 & N/A & N/A & N/A & 7.59 (9.35) & 7.53 (5.44)  \\
        & SINDy & 0.35 & 0.38 & 0.15 & 0.28 (0.06) & 0.35 (0.09) & \textbf{0.33} (0.04) & 0.50 (0.22) & 0.48 (0.21) \\
        & WSINDy & 0.06 & 0.07 & 0.00 & \textbf{0.10} (0.05) & \textbf{0.09} (0.05) & N/A & 1.17 (0.56) & 1.07 (0.57) \\
        & \texttt{Equiv}SINDy\texttt{-c} & \textbf{0.93} & \textbf{0.97} & \textbf{0.90} & 0.37 (0.07) & 0.37 (0.08) & 0.37 (0.08) & \textbf{0.37} (0.08) & \textbf{0.37} (0.08) \\
        \hline
        \multirow{5}{6em}{\centering Growth \eqref{eq:growth}} & GP & 0.00 & \textbf{1.00} & 0.00 & N/A & 0.16 (0.14) & N/A & 2.13 (0.00) & 0.16 (0.14) \\
        & D-CODE & 0.00 & 0.65 & 0.00 & N/A & 1.16 (0.28) & N/A & 2.17 (0.26) & 0.87 (0.55) \\
        & SINDy & 0.26 & 0.13 & 0.03 & 0.20 (0.10) & \textbf{0.06} (0.05) & 0.22 (0.02) & 0.43 (0.51) & 0.50 (0.46) \\
        & WSINDy & 0.00 & 0.00 & 0.00 & N/A & N/A & N/A & 2.45 (0.14) & 4.77 (3.34) \\
        & \texttt{Equiv}SINDy\texttt{-c} & \textbf{1.00} & \textbf{1.00} & \textbf{1.00} & \textbf{0.19} (0.10) & 0.08 (0.06) & \textbf{0.15} (0.07) & \textbf{0.19} (0.10) & \textbf{0.08} (0.06) \\
        \hline
    \end{tabular}
\end{table}

\begin{figure}[h]
    \centering
    \begin{subfigure}{.45\textwidth}
        \includegraphics[width=\textwidth]{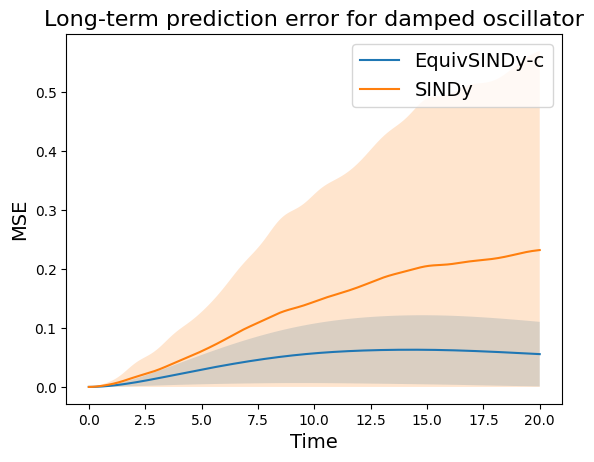}
    \end{subfigure}
    \begin{subfigure}{.46\textwidth}
        \includegraphics[width=\textwidth]{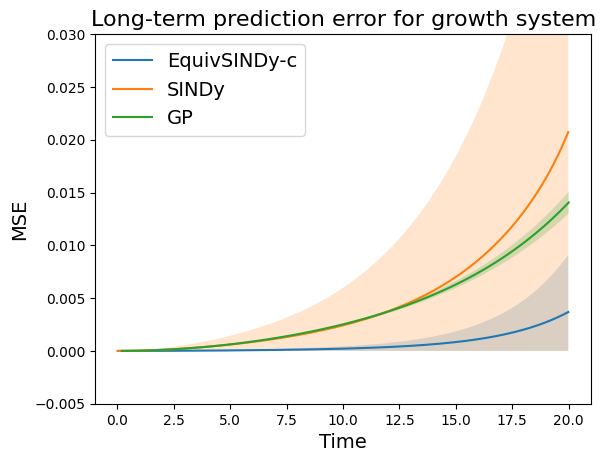}
    \end{subfigure}
    \caption{Long-term prediction error (MSE) against simulation time. The error curves are averaged over all discovered equations and random initial conditions in the test dataset. The shaded area indicates the standard deviation of prediction error at each timestep. Our \texttt{Equiv}SINDy\texttt{-c} has the slowest error growth. Some algorithms, e.g. genetic programming and Weak SINDy, are not included in the plot when the simulation quickly diverges and the error grows to infinity.}
    \label{fig:esindy-ltp}
\end{figure}

\cref{fig:esindy-ltp} plots the error growth curve when using the discovered ODEs to simulate future dynamics. Some algorithms, e.g. genetic programming and Weak SINDy, are not included in the plot when the simulation quickly diverges and the error grows to infinity. We observe that the prediction error grows slowest when using the discovered equations from our method, \texttt{Equiv}SINDy\texttt{-c}.

\subsubsection{A Higher-Dimensional System}
Our method can also be applied to higher-dimensional dynamical systems with symmetries. To demonstrate this, we consider the SEIR equations \cite{abou2020compartmental} which model the evolution of pandemics by the number of susceptible (S), exposed (E), infected (I), and recovered (R) individuals. It is a 4-dimensional ODE system with quadratic terms. We use the following equations as the target of equation discovery:
\begin{equation}
    \left\{
    \begin{aligned}
    \dot{S} &= 0.15 - 0.6SI \\
    \dot E &= 0.6SI - E \\
    \dot I &= E - 0.5I \\
    \dot R &= -0.15 + 0.5I
    \end{aligned}
    \right.
\end{equation}

Compared to the original formulation, we add two constant terms to the first and the fourth equation to account for the re-infection of patients due to waning immunity, mutation of the pathogen, etc. During data generation, we add a moderate $5\%$ noise to all trajectories.

We assume an intuitive symmetry of scaling the number of recovered individuals proportional to the total population: $v = (S+E+I+R)\partial_R$. We solve the constraint with regard to this symmetry as in \cref{sec:esindy}. \cref{fig:SEIR_Q} shows the reduced parameter space (from 60D to 34D).

\begin{figure}[h]
    \centering
    \includegraphics[width=0.92\textwidth]{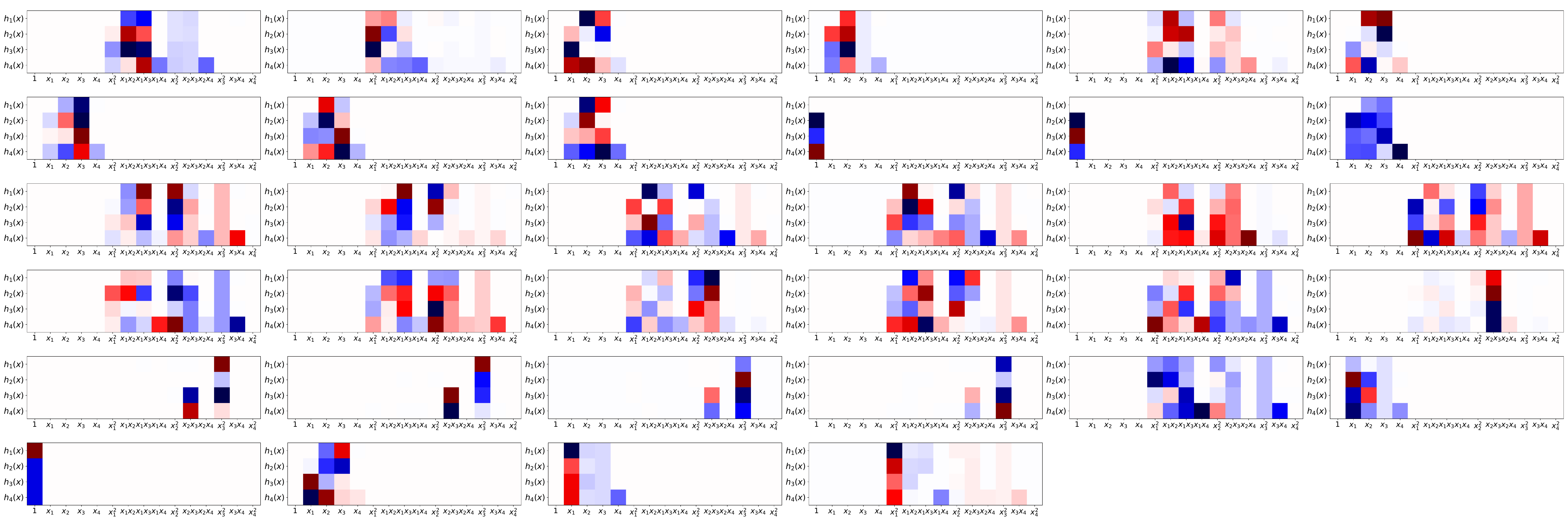}
    \caption{In the SEIR model, we search the equations of 4 variables with up to quadratic terms, leading to a 4$\times$15=60D parameter space. The symmetry $v=(S+E+I+R)\partial_R$ reduces it to 34D.}
    \label{fig:SEIR_Q}
\end{figure}

\cref{tab:higher-dimensional} shows the discovery results of SINDy and our \texttt{Equiv}SINDy-\texttt{c} in terms of success probabilities and parameter estimation error when successful. It turns out that SINDy struggles to find the correct equations and never manages to correctly identify all the equations simultaneously. In comparison, by reducing the parameter space through symmetry constraint, our method greatly increases the chance of successful discovery.

\begin{table}[h]
    \centering
    \caption{Equation discovery on SEIR epidemic model (4D). Results are evaluated on 50 random runs.}
    \label{tab:higher-dimensional}
    \begin{tabular}{cccccc}
    \hline
        \multirow{2}{4em}{\centering Dynamics} & \multirow{2}{4em}{\centering Method} & \multicolumn{2}{c}{Success prob.} & \multicolumn{2}{c}{RMSE (successful)} \\
        & & Individual eqs & Joint & Individual eqs & Joint \\
    \hline
    % \multirow{2}{4em}{Lorenz} & SINDy & 0.46/0.08/\textbf{1.00} & 0.06 & \textbf{0.11}/\textbf{0.19}/\textbf{0.18} & \textbf{0.186} (0.007) \\
    % & \texttt{Equiv}SINDy\texttt{-r} & \textbf{0.58}/\textbf{0.22}/\textbf{1.00} & \textbf{0.14} & 0.13/0.20/0.19 & \textbf{0.186} (0.014) \\
    % \hhline{======}
    \multirow{2}{4em}{\centering SEIR} & SINDy & 0.02/0.02/0.20/0.10 & 0.00 & 1.57/\textbf{0.39}/\textbf{1.42}/\textbf{1.54} & - \\
    & \texttt{Equiv}SINDy\texttt{-c} & \textbf{0.26}/\textbf{0.98}/\textbf{0.84}/\textbf{0.64} & \textbf{0.14} & \textbf{1.22}/0.55/1.86/1.62 & \textbf{1.57} (0.47)\\
    \hline
    \end{tabular}
\end{table}

\subsection{Symmetry Regularization} \label{sec:supp-symmreg}

\begin{figure}[h]
    \centering
    \begin{subfigure}{.41\textwidth}
        \includegraphics[width=\textwidth]{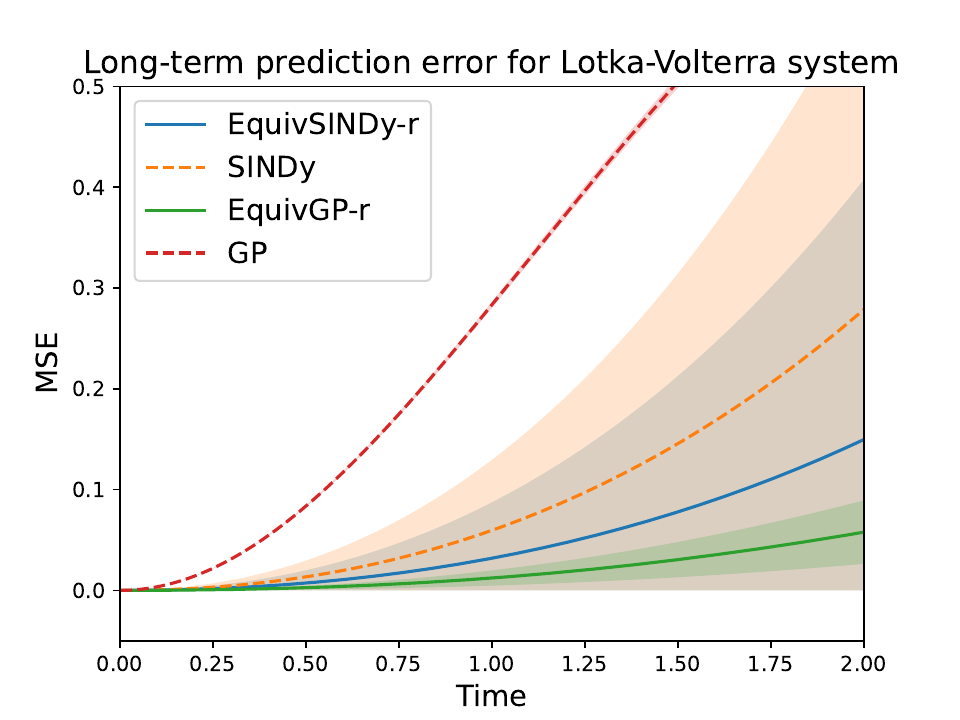}
    \end{subfigure}
    \caption{Long-term prediction error (MSE) on Lotka-Volterra system. The error curves are averaged over all discovered equations and random initial conditions in the test dataset. The shaded area indicates the standard deviation of prediction error at each timestep. Our methods based on symmetry regularization have slower error growth (visualized in solid lines). }
    \label{fig:symmreg-ltp}
    \vspace{-3mm}
\end{figure}

In addition to \cref{tab:symmreg-full} in the main text,
\cref{fig:symmreg-ltp} and \cref{fig:symmreg-ltp-2} show the error growth when using the discovered ODEs to simulate future dynamics. Some algorithms are not included in the plot because the simulation from the learned equation quickly diverges and the error grows to infinity. For the Lotka-Volterra system in \cref{fig:symmreg-ltp}, we observe that the prediction error grows slowest when using the discovered equations from our methods based on symmetry regularizations. Among all the methods, genetic programming with symmetry regularization has the most accurate prediction.

\cref{fig:symmreg-ltp-2} shows the results on the Sel'Kov glycolytic oscillator. Our symmetry regularization improves prediction accuracy over both of the base methods (SINDy and GP). However, D-CODE can achieve the best prediction accuracy in a long horizon, even though its discovered equations may not have the correct form.

Besides, it is noteworthy that genetic programming with symmetry regularization leads to a fluctuating error curve (after averaging). A potential reason is that it may discover some equations that are far from accurate but still lead to a periodic trajectory, similar to the true trajectory of the Sel'Kov glycolytic oscillator. The trajectory may have a large ``eccentricity'', so the error grows fast during a certain time and decreases rapidly afterward. 

\begin{figure}[h]
    \centering
    \begin{subfigure}{.45\textwidth}
        \includegraphics[width=\textwidth]{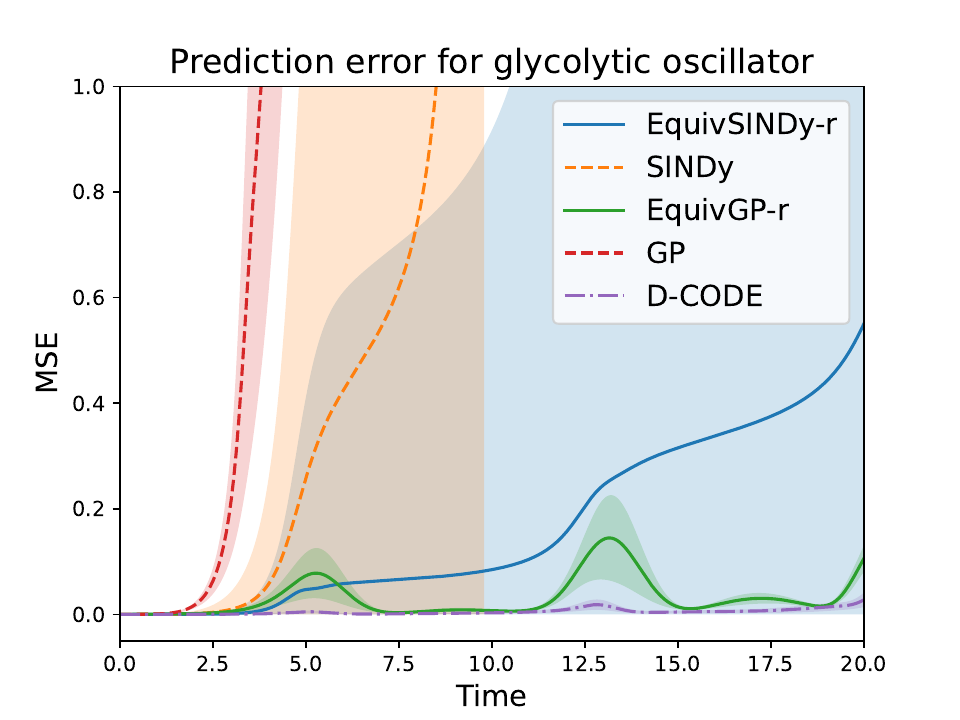}
    \end{subfigure}
    % \begin{subfigure}{.46\textwidth}
    %     \includegraphics[width=\textwidth]{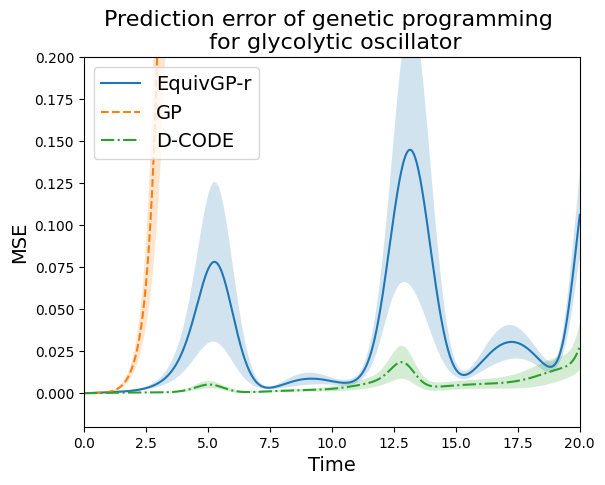}
    % \end{subfigure}
    % \includegraphics[.4\textwidth]{fig/selkov-ltp.pdf}
    \caption{Long-term prediction error (MSE) on glycolytic oscillator. Left: results for sparse regression-based methods. Right: genetic programming-based methods. Our symmetry regularization improves prediction accuracy over the base methods. D-CODE can achieve the best prediction accuracy in a long horizon, even though its discovered equations may not have the correct form.}
    \label{fig:symmreg-ltp-2}
\end{figure}

Besides, we also visualize the simulated trajectories of some learned equations in \cref{fig:sim-trajs}. This provides a more intuitive explanation of the error growth. For example, the exploding variance in the long-term prediction error in the Lotka-Volterra system can be seen in \cref{fig:sim-trajs-lv}, where predicted trajectories from most of the learned equations stay close to ground truth, but a few equations lead to diverging predictions due to the incorrect modeling of the exponential terms.

\begin{figure}[h]
    \centering
    \begin{subfigure}{.45\textwidth}
        \includegraphics[width=\textwidth]{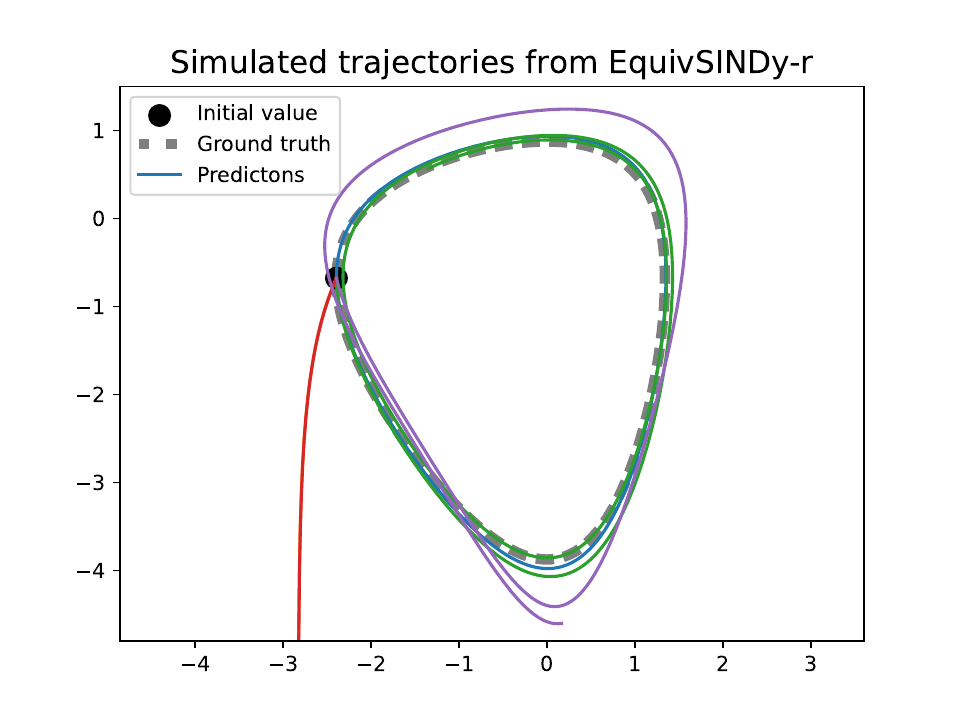}
        \caption{Lotka-Volterra system}
        \label{fig:sim-trajs-lv}
    \end{subfigure}
    \begin{subfigure}{.45\textwidth}
        \includegraphics[width=\textwidth]{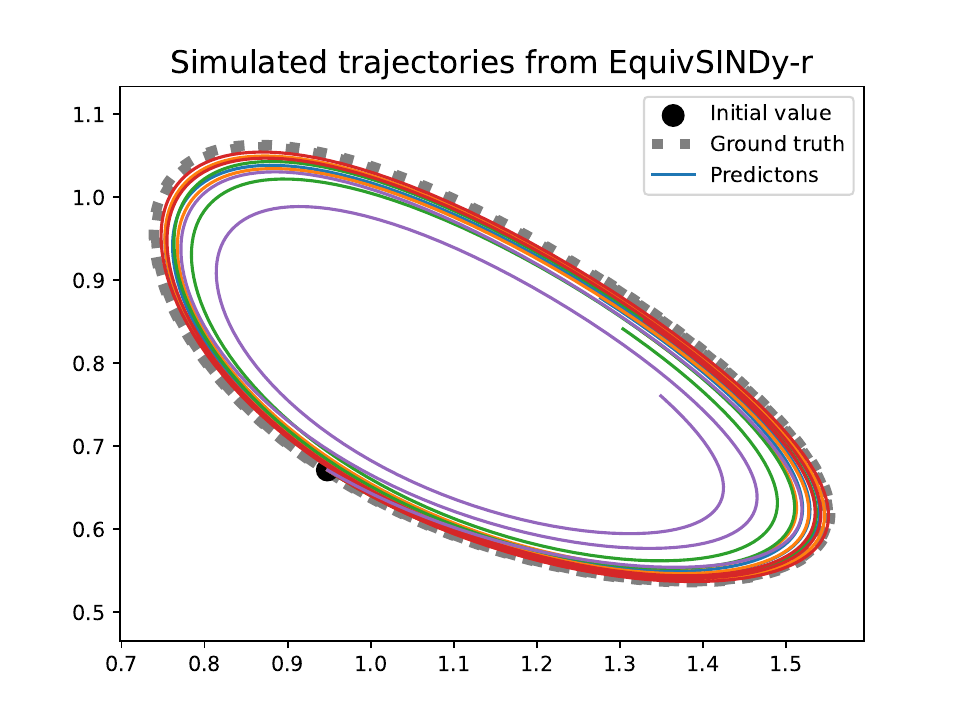}
        \caption{Glycolytic oscillator}
    \end{subfigure}
    \caption{Simulated trajectories of the learned equations from \texttt{Equiv}SINDy\texttt{-r}. In each figure, we sample 5 learned equations and start the simulation with the same initial value. The solid lines of different colors correspond to different discovered equations.}
    \label{fig:sim-trajs}
\end{figure}

\paragraph{Regularization Options.}
As mentioned in \cref{sec:symmreg}, we can introduce different forms of symmetry regularizations based on the four equations in \cref{th:equiv-constraints}
, listed as follows. The subscripts are short for \textbf{I}nfinitesimal/\textbf{F}inite \textbf{G}roup actions and \textbf{I}nfinitesimal/\textbf{F}inite-time flow map of the OD\textbf{E}.
\begin{align}
    \mc{L}_\text{IGFE} &= \mathbb E_{\bm x} \left[ \sum_{v \in B(\mf g)} \frac{\| J_{\bm f_\tau}(\bm x) v(\bm x) - v(\bm f_\tau(\bm x))\|^2}{\| J_{\bm f_\tau}(\bm x) v(\bm x) \| ^2} \right]
    \label{eq:igfe-restate} \\
    \mc{L}_\text{FGFE} &= \mathbb E_{\bm x} \left[ \sum_{v \in B(\mf g)} \frac{\| \bm f_\tau(g\cdot\bm x) - g\cdot \bm f_\tau(\bm x)\|^2}{\| \bm f_\tau(g\cdot\bm x) - \bm f_\tau(\bm x) \| ^2} \right]
    \label{eq:fgfe} \\
    \mc{L}_\text{FGIE} &= \mathbb E_{\bm x} \left[ \sum_{v \in B(\mf g)} \frac{\| J_g(\bm x) \bm h(\bm x) - \bm h (g\cdot\bm x)\|^2}{ \| J_g(\bm x) \bm h(\bm x) \|^2 } \right]
    \label{eq:fgie} \\
    \mc{L}_\text{IGIE} &= \mathbb E_{\bm x} \left[ \sum_{v \in B(\mf g)} \frac{\| J_v(\bm x) \bm h(\bm x) - J_{\bm h}(\bm x) v(\bm x)\|^2}{ \| J_v(\bm x) \bm h(\bm x) \|^2 } \right]
    \label{eq:igie}
\end{align}

where $g=\exp(\epsilon v)$ is computed for some fixed scale parameter $\epsilon$. In other words, the summation is always performed over the learned infinitesimal generators ($v \in B(\mathfrak g)$ and the group elements $g$ in the above formulas are dependent on $v$.

We also note that the denominator of \eqref{eq:fgfe} has a somewhat different form from others, i.e. the difference between $\bm f(g\bm x)$ and $\bm f(\bm x)$. If we use $\bm f(g\bm x)$ or $\bm f(\bm x)$ instead, the scale of the denominator would also depend on $\bm x$. As a result, regions closer to the origin in the state space have larger weights (smaller denominators) during loss computation. It’s helpful to consider what would happen in the limit of $\epsilon\to 0, \tau\to 0$. The formulas inside the norms in the numerator and denominator in (35) both become $\bm O(\epsilon\tau)$, while $\bm f(g\bm x) = \bm f(\exp (\epsilon v)\bm x) = \bm f(\bm x + \bm O(\epsilon)) = \bm x + \bm O(\epsilon) + \bm O(\tau) + \bm O(\epsilon\tau)$, which does not match the numerator.

While the corresponding constraints in \cref{th:equiv-constraints} of these losses are all necessary conditions for symmetry, these loss terms have different implementations that may bring certain advantages or disadvantages depending on the application, e.g. the representation of symmetry and the algorithm for equation discovery.
% We will compare these losses in more detail in \cref{sec:supp-symmreg}. Empirically, we find these options improve the equation discovery performance similarly except for \eqref{eq:igie}, and we use $\mc L_\text{IGFE}$ \eqref{eq:igfe} by default.

In practice, we use only one of the losses \mref{eq:igfe-restate,eq:fgfe,eq:fgie,eq:igie} for training. Empirically, we find that \mref{eq:igfe-restate,eq:fgfe,eq:fgie} improve the discovery performance similarly while \eqref{eq:igie} leads to a performance drop. Detailed results are available in \cref{tab:symmreg-options}. We conjecture this is because we train a neural network to model the finite symmetry transformations (discussed in \cref{sec:symmdis}). Computing the Jacobian $J_v$ therefore involves computing second-order derivatives of the network and amplifies the numerical error.
On the other hand, if we have exact knowledge about the symmetry of the system in terms of its infinitesimal generators $v$, \eqref{eq:igie} becomes the most efficient way to incorporate symmetry loss, without the need of computing group elements based on the Lie algebra generators or integrating the learned ODE to its flow map.

\begin{table}[ht]
\scriptsize
\caption{Equation discovery statistics of sparse regression with different regularization options \mref{eq:igfe-restate, eq:fgfe, eq:fgie, eq:igie} on Lotka-Volterra system \eqref{eq:lv} at noise level $\sigma_R=99\%$ and Sel'Kov oscillator \eqref{eq:selkov} at noise level $\sigma_R = 20\%$. }
    \label{tab:symmreg-options}
    \centering
    \begin{tabular}{c|c|ccc|ccc|cc}
    \hline
        \multirow{2}{4em}{\centering System} & \multirow{2}{5.4em}{\centering Regularization} & \multicolumn{3}{|c|}{Success prob.} & \multicolumn{3}{|c|}{RMSE (successful)} & \multicolumn{2}{c}{RMSE (all)}\\
        & & Eq. 1 & Eq. 2 & All & Eq. 1 & Eq. 2 & All & Eq. 1 & Eq. 2\\
        \hline
        \multirow{4}{4em}{\centering L-V \eqref{eq:lv}} & IGFE & {0.54} & \textbf{0.58} & {0.36} & 1.00 (0.21) & \textbf{0.45} (0.20) & \textbf{0.79} (0.15) & 3.16 (2.46) & 3.83 (4.01) \\
        & FGFE & {0.54} & 0.56 & 0.34 & 0.99 (0.21) & \textbf{0.45} (0.20) & \textbf{0.79} (0.15) & 3.29 (2.65) & 4.00 (4.05) \\
        & FGIE & \textbf{0.58} & \textbf{0.58} & \textbf{0.38} & 0.99 (0.20) & \textbf{0.44} (0.20) & \textbf{0.79} (0.14) & \textbf{2.90} (2.28) & \textbf{3.80} (4.00) \\
        & IGIE & 0.48 & 0.16 & 0.08 & \textbf{0.58} (0.31) & 0.66 (0.17) & 0.88 (0.06) & 3.31 (2.67) & 7.61 (3.42) \\
        % \cline{2-10}
        \hhline{=|=|===|===|==}
        \multirow{4}{4em}{\centering Sel'Kov \eqref{eq:selkov}} & IGFE & 0.40 & \textbf{0.70} & \textbf{0.28} & 0.92 (0.22) & 0.30 (0.13) & 0.71 (0.16) & 9.97 (8.07) & \textbf{7.29} (12.72) \\
        & FGFE & {0.48} & 0.68 & \textbf{0.28} & \textbf{0.87} (0.22) & \textbf{0.26} (0.12) & \textbf{0.63} (0.13) & {8.38} (8.55) & 7.97 (14.60) \\
        & FGIE & 0.48 & 0.54 & \textbf{0.28} & 0.99 (0.19) & 0.72 (0.15) & 0.88 (0.08) & 8.79 (8.94) & 10.85 (17.50)\\
        & IGIE & \textbf{0.50} & 0.48 & 0.22 & 0.94 (0.17) & 0.89 (0.16) & 0.91 (0.09) & \textbf{8.25} (8.09) & 14.56 (20.21)\\
        % \cline{2-10}
        \hline
    \end{tabular}
\end{table}

Besides, \eqref{eq:fgie} and \eqref{eq:igie} have a specific advantage that any computation related to group actions is independent of the function $\bm h$ which is constantly updated throughout training. Thus, we can compute the (infinitesimal) group actions and their Jacobians in advance over the entire dataset. This provides better compatibility between the use of symmetry and existing tools for equation discovery.
For example, in \texttt{PySR} framework \citep{cranmer2023interpretable}, it is difficult to compute group transformations 
$g \cdot \bm x$ based on an external neural network module during the equation search. We overcome this challenge by pre-computing the transformations and using the transformed data as auxiliary inputs.

\cref{tab:symm-reg-comparison} summarizes the comparison between these losses. While their corresponding constraints in \cref{th:equiv-constraints} are all necessary conditions for symmetry, they have different implementations that may bring certain advantages or disadvantages depending on the application, e.g. the representation of symmetry and the algorithm for equation discovery. For the experiments in \cref{sec:symmreg_exp}, we use $\mc L_{\text{IGFE}}$ for sparse regression and $\mc L_{\text{FGIE}}$ for genetic programming.

\begin{table}[h]
\caption{Comparison between different symmetry regularization losses on whether they need to compute (finite) group elements, compute higher-order derivatives (for infinitesimal action), integrate learned equations, and whether they can pre-compute the symmetry transformations over the data.}
\label{tab:symm-reg-comparison}
\centering
\begin{tabular}{ccccc}
\hline
    Loss & \eqref{eq:igfe-restate} & \eqref{eq:fgfe} & \eqref{eq:fgie} & \eqref{eq:igie} \\
\hline
    Compute group elements & \textbf{No} & Yes & Yes & \textbf{No} \\
    Higher-order derivatives & \textbf{No} & \textbf{No} & \textbf{No} & Yes \\
    Integrate learned equations & Yes & Yes & \textbf{No} & \textbf{No} \\
    Pre-compute symmetry & No & No & \textbf{Yes} & \textbf{Yes} \\
\hline
\end{tabular}
\end{table}

\subsection{Equivariant SINDy in Latent Space}

\begin{table}[h]
\caption{Samples of discovered equations from SINDyAE \cite{champion19}, LaLiGAN + SINDy \cite{laligan} and our Equivariant SINDyAE.}
    \label{tab:eae-eq}
\scriptsize
    \centering
    \begin{tabular}{c|l|l|l}
    \hline
        Method & \multicolumn{3}{|c}{Equation Samples} \\
        \hline
        SINDyAE & $\begin{aligned}
            \dot z_1 &= -0.98 + 0.16 z_2^2 \\
            \dot z_2 &= 0.16 z_2 + 0.80 z_1^2
        \end{aligned}$ & $\begin{aligned}
            \dot z_1 &= 0.42 z_2^2 \\
            \dot z_2 &= 0.88z_1 + 0.37z_1^2 + 0.85 z_2^2
        \end{aligned}$ & $\begin{aligned}
            \dot z_1 &= 1.08z_2 + 0.32 z_2^2 \\
            \dot z_2 &= 0.28 z_1
        \end{aligned}$ \\
        \hline
        LaLiGAN+SINDy & $\begin{aligned}
            \dot z_1 &= 0.10 z_1 + 0.97 z_2 \\
            \dot z_2 &= -0.88 z_1
        \end{aligned}$ & $\begin{aligned}
            \dot z_1 &= 0.99 z_2 \\
            \dot z_2 &= -0.86 z_1
        \end{aligned}$ & $\begin{aligned}
            \dot z_1 &= 0.11 z_1 + 0.92 z_2 \\
            \dot z_2 &= -0.92 z_1 - 0.11 z_2
        \end{aligned}$ \\
        \hline
        EquivSINDyAE & $\begin{aligned}
            \dot z_1 &= - 0.88 z_2 \\
            \dot z_2 &= 0.92 z_1
        \end{aligned}$ & $\begin{aligned}
            \dot z_1 &= - 0.99 z_2 \\
            \dot z_2 &= 0.83 z_1
        \end{aligned}$ & $\begin{aligned}
            \dot z_1 &= 0.78 z_2 \\
            \dot z_2 &= -1.04 z_1
        \end{aligned}$ \\
        \hline
    \end{tabular}
\end{table}

In \cref{tab:eae-eq}, we show some latent equations for the reaction-diffusion system in \cref{sec:esindyae_exp} discovered by different methods. Note that there is no single correct answer for the latent equations, because there might exist different latent spaces where the high-dimensional dynamics can be described accurately via different equations. Thus, the analysis in this subsection is mainly qualitative.

The discovered equations from SINDy Autoencoder (SINDyAE) contain numerous quadratic terms. While these equations may achieve a relative low SINDy error evaluated on individual timesteps, they fail to account for the periodic nature of the reaction-diffusion system. As a result, if we simulate the system towards a long horizon with these equations, the error will accumulate rapidly, as shown in \cref{fig:eae-ltp}. Also, the discovered equations from multiple runs are essentially different, making it difficult to select the best one to describe the dynamics.

Compared to SINDyAE, symmetry-based methods such as LaLiGAN + SINDy and our equivariant SINDy Autoencoder can stably discover simple linear equations with low prediction error. Moreover, we observe that our method yields more consistent results than the non-equivariant approach (LaLiGAN + SINDy). In LaLiGAN + SINDy, although the latent space has a linear symmetry action, the discovered equations are not constrained to satisfy the symmetry. The resulting equations can take slightly different forms, as shown in \cref{tab:eae-eq}. On the other hand, our method constantly discovers equations in the form $\dot z_1 = -az_2,\,\dot z_2 = b z_1$, where the numerical relationship between $a$ and $b$ depends on the discovered symmetries.

\subsection{Comparison with Neural-ODE-Based Methods}

We compare our method with the Time-Reversal Symmetry ODE Network (TRS-ODEN) \citep{huh2020time}, which uses the discrete symmetry of time-reversal to improve dynamics learning. TRS-ODEN aims to learn the physical dynamics with black-box neural networks and adds a loss term to encourage time-reversal symmetry. We evaluate TRS-ODEN as well as other baselines in \citet{huh2020time}, i.e. Neural ODE (ODEN) and Hamiltonian Neural ODE (HODEN), on the damped oscillator system \eqref{eq:dosc} and the Lotka-Volterra system \eqref{eq:lv} in terms of the long-term prediction error. Other metrics, including success probability and parameter estimation error, are not compared because Neural ODE does not learn a symbolic equation.

\begin{figure}[h]
    \centering
    \begin{minipage}[h]{.85\textwidth}
        \centering
        \centering
    \begin{subfigure}{.49\textwidth}
        \centering
        \includegraphics[width=\textwidth]{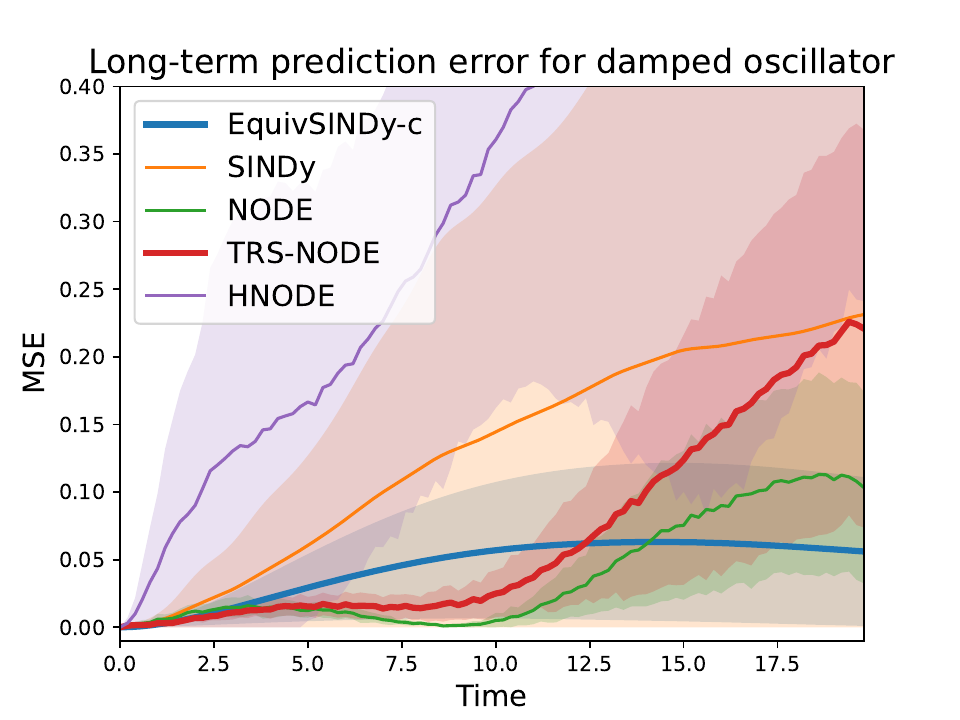}
    \end{subfigure}
    \begin{subfigure}{.49\textwidth}
        \centering
        \includegraphics[width=\textwidth]{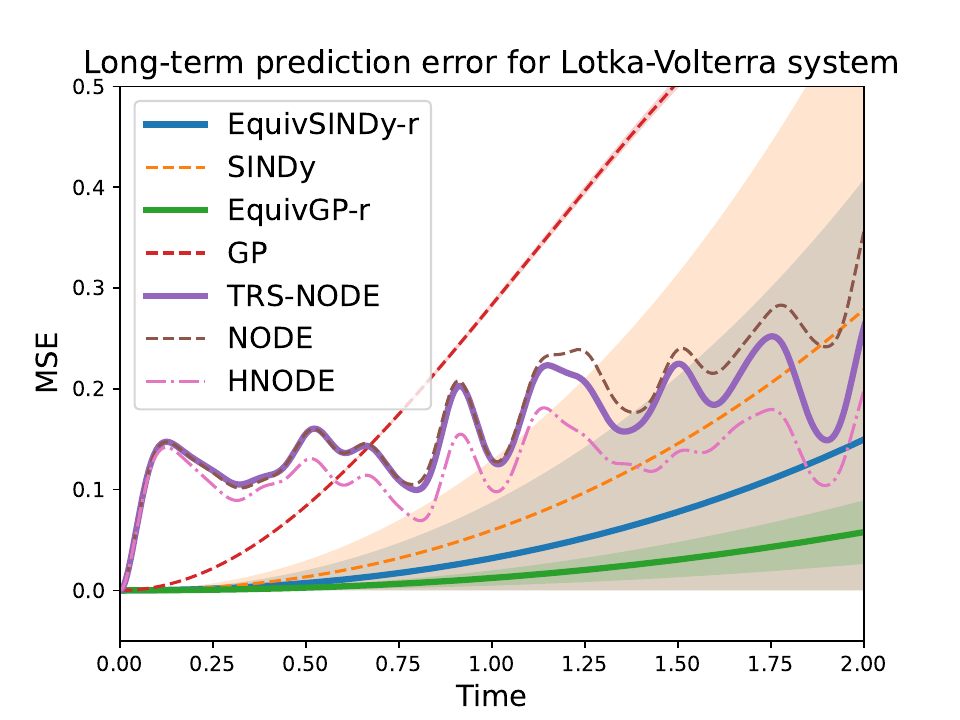}
    \end{subfigure}
    \caption{Long-term prediction errors of our method \& Neural-ODE-based methods on the irreversible damped oscillator (left) and the reversible Lotka-Volterra system (right).}
    \label{fig:trs-oden}
    \end{minipage}
\end{figure}

\cref{fig:trs-oden} plots the long-term prediction error against the prediction horizon. In the damped oscillator, because it is irreversible and not conservative, TRS-ODEN and HODEN do not help. In the reversible Hamiltonian system of Lotka-Volterra equations, TRS-ODEN and HODEN improve upon the baseline Neural ODE, but have slightly higher errors than our discovered equations with Lie symmetry regularization. Notably, in the Lotka-Volterra system, our method does not rely on any prior knowledge (e.g. time-reversal symmetry or conservation of energy) but still achieves lower prediction error. To conclude, these results demonstrate that our method can be universally applied to various dynamical systems and achieve superior performance.

\section{Experiment Details}\label{sec:exp-detail}

\subsection{Data Generation}\label{sec:data-gen}
\paragraph{Damped Oscillator.} We generate 50 trajectories for training, 10 for validation, and 10 for training, with $T=100$ timesteps sampled at a fixed rate $\Delta t=0.2$. The initial conditions are sampled by sampling the distance to origin $r\sim \mc U[0.5, 2]$ and the polar angle $\theta\sim \mc U[0, 2\pi]$ and then converting to $(x_1,x_2)=(r\cos\theta, r\sin\theta)$. The numerical integration is performed with a step size of $0.002$. We apply white noise with noise level $\sigma_R = 20\%$ to each dimension of the observation.

\paragraph{Growth.} We generate 100 trajectories for training, 20 for validation, and 20 for testing, with $T=100$ timesteps sampled at a fixed rate $\Delta t=0.02$. The initial conditions are sampled uniformly from $[0.2, 1] \times [0.2, 1]$. The numerical integration is performed with a step size of $0.002$. Instead of applying white noise, we apply a multiplicative noise to this system, so the noise grows together with the scale of the state variables as the system evolves. Specifically, the observation is obtained by $\tilde x_i = x_i(1 + \epsilon_i)$, where $x_i$ is the true state from numerical integration and $\epsilon_i$ is sampled from a Gaussian distribution with zero mean and standard deviation $\sigma = 0.05$. We still refer to this as $5\%$ noise level in the main paper. While the assumption of Gaussian process does not hold here, we find that Gaussian process smoothing can still improve the likelihood for the equation discovery algorithms to recover the correct equation. Therefore, just as in any other ODE systems considered in this work, we apply Gaussian process smoothing before running equation discovery methods.

\paragraph{Lotka-Volterra System.} We generate 200 trajectories for training, 20 for validation, and 20 for testing, with $T=10000$ timesteps sampled at a fixed rate $\Delta t=0.002$. For initial conditions, we first sample the population densities $(p_1, p_2)$ uniformly from $[0, 1] \times [0, 1]$. Then they are converted to the logarithm population densities in the canonical form of the equation, i.e. $x_1 = \log (p_1)$ and $x_2 = \log (p_2)$. Also, we ensure that the Hamiltonian of the system $\mc H = \exp(x_1) - x_1 + 1.333 \exp(x_2) - 0.667$ under the sampled initial conditions falls within the range of $[3, 4.5]$. We discard the initial conditions that do not meet this requirement. Then, the numerical integration is performed with a step size of $0.002$. We apply white noise with noise level $\sigma_R = 99\%$ to each dimension of the observation.

\paragraph{Glycolytic Oscillator.} We generate 10 trajectories for training, 2 for validation, and 2 for testing, with $T=10000$ timesteps sampled at a fixed rate $\Delta t=0.002$. The initial conditions are sampled uniformly from $[0.5, 1] \times [0.5, 1]$. The numerical integration is performed with a step size of $0.002$. We apply white noise with noise level $\sigma_R = 20\%$ to each dimension of the observation. We note that our experimental results on this system are somewhat different from the results in \citet{qian2022dcode} on this system with the same noise level. This is because our initial conditions are sampled from a larger range which leads to more diverse trajectories. As a result, the metrics such as success probabilities and parameter estimation errors appear worse than in \citet{qian2022dcode}. However, we still ensure fair comparison as we use the same generated dataset with our configurations for all the methods involved in our experiment.

\paragraph{Reaction-Diffusion System.} The system is governed by the PDE
\begin{align*}
    u_t=&(1-(u^2+v^2))u+(u^2+v^2)v+0.1(u_{xx}+u_{yy}),\cr
    v_t=&-(u^2+v^2)u+(1-(u^2+v^2))v+0.1(u_{xx}+u_{yy}).
\end{align*}
We use the code from SINDy Autoencoder \footnote{\href{https://github.com/kpchamp/SindyAutoencoders/tree/master/rd\_solver}{https://github.com/kpchamp/SindyAutoencoders/tree/master/rd\_solver}} \citep{champion19} to generate the data.
The 2D space of $u$ and $v$  is discretized into a $100\times100$ grid. The system is simulated from one initial value for $T=6000$ timesteps with step size $\Delta t=0.05$. Then, we add random Gaussian noises with standard deviation $10^{-6}$ to each pixel and at each timestep. We use the timesteps $t \in [0, 4800)$ for training. For forecasting, we use the timestep $t = 4800$ as the initial value and simulate up to $600$ timesteps with each method. The simulations are compared with the ground truth during $t \in [4800, 5400)$ to calculate the error.

\subsection{Equation Discovery}
\subsubsection{Sparse Regression}
For all algorithms based on sparse regression (SINDy), we use a function library $\Theta(\bm x)$ that contains up to second-order polynomials, with an exception for Lotka-Volterra system where we also include the exponential terms. We apply sequential thresholding \citep{brunton16} to enforce parsimony in the equations. The threshold is set to $0.05$ for the damped oscillator and the growth system, $0.075$ for the glycolytic oscillator, and $0.15$ for the Lotka-Volterra system. The same threshold is applied to all methods including SINDy, WSINDy and \texttt{Equiv}SINDy.

The original SINDy uses a least square objective and can be solved explicitly. However, when we introduce symmetry regularization into the training objective, the original solver is no longer applicable. For our equivariant models based on symmetry regularization, we use the L-BFGS algorithm \citep{nocedal1980updatinglbfgs} for optimizing the SINDy parameters. We find L-BFGS more effective than stochastic gradient descent for these low-dimensional problems.

\subsubsection{Genetic Programming}
We incorporate our symmetry regularization \eqref{eq:fgie} to an existing genetic programming package for discovering equations, \texttt{PySR} \citep{cranmer2023interpretable}. The ``GP'' baseline in the tables also refers to the implementation of genetic programming with \texttt{PySR}. On the other hand, we run the experiments on D-CODE with their official codebase \footnote{\href{https://github.com/ZhaozhiQIAN/D-CODE-ICLR-2022}{https://github.com/ZhaozhiQIAN/D-CODE-ICLR-2022}}. This codebase is developed based on another genetic programming package, \texttt{gplearn}. We should note that the performance difference between our \texttt{Equiv}GP and D-CODE can be partially caused by their different backends.

\end{document}